\synctex=1
\documentclass{article} 
\usepackage{iclr2024_conference,times}
\usepackage[backref=page]{hyperref}
\usepackage{subcaption}
\usepackage{booktabs}
\usepackage{multirow}
\usepackage{url}
\usepackage{tikz}
\usetikzlibrary{arrows}
\usepackage{paralist}
\usepackage[stable]{footmisc}
\usepackage[textwidth=0.7in, textsize=tiny]{todonotes}
\usepackage{ifthen}
\usepackage{rotating}

\usepackage{forest}

\usepackage{adjustbox}

\usepackage{diagbox}

\usepackage{bm}
\usepackage{amsmath} 
\usepackage{amsthm}
\usepackage{amsfonts}
\usepackage{amssymb}
\usepackage[mathic=true]{mathtools}
\usepackage{fixmath}
\usepackage{siunitx}

\usepackage{arydshln}

\usepackage{thmtools}		
\usepackage{mleftright}
\usepackage{stmaryrd}
\usepackage{nicefrac}
\usepackage{fontawesome5}
\usepackage{graphicx}

\theoremstyle{plain}
\newtheorem{theorem}{Theorem}[section]
\newtheorem{proposition}[theorem]{Proposition}
\newtheorem{lemma}[theorem]{Lemma}

\theoremstyle{definition}

\theoremstyle{remark}

\let\originalleft\left
\let\originalright\right
\renewcommand{\left}{\mathopen{}\mathclose\bgroup\originalleft}
\renewcommand{\right}{\aftergroup\egroup\originalright}

\usepackage{pifont}
\newcommand{\cmark}{\ding{51}}
\newcommand{\xmark}{\ding{55}}

\usepackage{enumitem}
\setlist{nosep} 

\makeatletter
\def\thmt@refnamewithcomma #1#2#3,#4,#5\@nil{%
\@xa\def\csname\thmt@envname #1utorefname\endcsname{#3}%
\ifcsname #2refname\endcsname
\csname #2refname\expandafter\endcsname\expandafter{\thmt@envname}{#3}{#4}%
\fi
}
\makeatother
\usepackage[backref=page]{hyperref}
\usepackage[capitalise,noabbrev]{cleveref}   

\usepackage[activate={true,nocompatibility},final,kerning=true,stretch=0, shrink=65]{microtype}
\usepackage{ellipsis}

\title{Probabilistically Rewired Message-Passing \\Neural Networks}

\author{Chendi Qian\textsuperscript{*} \\
Computer Science Department\\
RWTH Aachen University, Germany\\
\texttt{chendi.qian@log.rwth-aachen.de} \\
\And
Andrei Manolache\textsuperscript{*}\\
Computer Science Department\\
University of Stuttgart, Germany\\
Bitdefender, Romania\\
\texttt{andrei.manolache@ki.uni-stuttgart.de} \\
\AND
Kareem Ahmed, Zhe Zeng \& Guy Van den Broeck \\
Computer Science Department\\
University of California, Los Angeles, USA\\
\AND
Mathias Niepert\textsuperscript{$\dagger$} \\
Computer Science Department\\
University of Stuttgart, Germany\\
\And
Christopher Morris\textsuperscript{$\dagger$} \\
Computer Science Department\\
RWTH Aachen University, Germany
}

\newcommand{\mG}{\mathbf{G}}

\newcommand{\mL}{\mathbf{L}}

\newcommand{\mLG}{\mathbf{L}}

\newcommand{\Nb}{\mathbb{N}}

\newcommand{\Rb}{\mathbb{R}}

\newcommand{\wlone}{$1$\textrm{-}\textsf{WL}}

\newcommand{\hb}{\mathbf{h}}

\newcommand{\UPD}{\mathsf{UPD}}
\newcommand{\AGG}{\mathsf{AGG}}
\newcommand{\RO}{\mathsf{READOUT}}

\newcommand{\REL}{\mathsf{RELABEL}}

\newcommand{\bbR}{\ensuremath{\mathbb{R}}}

\newcommand{\new}[1]{\emph{#1}}
\DeclarePairedDelimiter{\norm}{\lVert}{\rVert}

\renewcommand{\vec}[1]{\mathbf{#1}}
\newcommand{\oms}{\{\!\!\{}
\newcommand{\cms}{\}\!\!\}}
\newcommand{\tup}[1]{{(#1)}}

\newcommand{\xhdr}[1]{{\noindent\bfseries #1}}

\newcommand{\ouracro}{\mathrm{DRew}}

\usepackage{bm}

\newcommand{\bz}{\bm{z}}
\newcommand{\bu}{\bm{d}}

\newcommand{\bv}{\bm{u}}
\newcommand{\bomega}{\bm{\omega}}

\newcommand{\bmu}{\bm{\mu}}
\newcommand{\btheta}{\bm{\theta}}

\newcommand{\bigO}{\ensuremath{\mathcal{O}}\xspace}

\DeclareMathOperator*{\argmax}{arg\,max}

\DeclareMathOperator*{\sigmoid}{sigmoid}

\definecolor{cb-black}      {RGB}{  0,   0,   0}
\definecolor{cb-blue-green} {RGB}{  0,  073,  073}
\definecolor{cb-rose}       {RGB}{255, 109, 182}
\definecolor{cb-salmon-pink}{RGB}{255, 182, 119}
\definecolor{cb-purple}     {RGB}{ 73,   0, 146}
\definecolor{cb-blue}       {RGB}{ 0, 109, 219}
\definecolor{cb-lilac}      {RGB}{182, 109, 255}
\definecolor{cb-blue-sky}   {RGB}{109, 182, 255}
\definecolor{cb-blue-light} {RGB}{182, 219, 255}
\definecolor{cb-burgundy}   {RGB}{146,   0,   0}
\definecolor{cb-brown}      {RGB}{146,  73,   0}
\definecolor{cb-clay}       {RGB}{219, 209,   0}
\definecolor{cb-green-lime} {RGB}{ 36, 255,  36}
\definecolor{cb-yellow}     {RGB}{255, 255, 109}

\definecolor{cb-green-sea}{HTML}{0173B2}
\definecolor{cb-burgundy}{HTML}{029E73}
\definecolor{cb-lilac}{HTML}{D55E00}

\definecolor{first}{HTML}{029E73}
\definecolor{second}{HTML}{0173B2}
\definecolor{third}{HTML}{D55E00}
\definecolor{fourth}{HTML}{8000ff}

\definecolor{sns_cb1}{HTML}{029E73}
\definecolor{sns_cb2}{HTML}{0173B2}
\definecolor{sns_cb3}{HTML}{D55E00}
\definecolor{sns_cb4}{HTML}{CC78BC}
\definecolor{sns_cb5}{HTML}{ECE133}
\definecolor{sns_cb6}{HTML}{56B4E9}

\iclrfinalcopy 
\begin{document}

\maketitle
\def\thefootnote{*}\footnotetext{These authors contributed equally.}\def\thefootnote{\arabic{footnote}}
\def\thefootnote{$\dagger$}\footnotetext{Co-senior authorship.}\def\thefootnote{\arabic{footnote}}

\begin{abstract}
	Message-passing graph neural networks (MPNNs) emerged as powerful tools for processing graph-structured input. However, they operate on a fixed input graph structure, ignoring potential noise and missing information. Furthermore, their local aggregation mechanism can lead to problems such as over-squashing and limited expressive power in capturing relevant graph structures. Existing solutions to these challenges have primarily relied on heuristic methods, often disregarding the underlying data distribution. Hence, devising principled approaches for learning to infer graph structures relevant to the given prediction task remains an open challenge. In this work, leveraging recent progress in exact and differentiable $k$-subset sampling, we devise probabilistically rewired MPNNs (PR-MPNNs), which learn to add relevant edges while omitting less beneficial ones. For the first time, our theoretical analysis explores how PR-MPNNs enhance expressive power, and we identify precise conditions under which they outperform purely randomized approaches. Empirically, we demonstrate that our approach effectively mitigates issues like over-squashing and under-reaching. In addition, on established real-world datasets, our method exhibits competitive or superior predictive performance compared to traditional MPNN models and recent graph transformer architectures.
\end{abstract}

\section{Introduction}
Graph-structured data is prevalent across various application domains, including fields like chemo- and bioinformatics~\citep{Barabasi2004,Jum+2021,reiser2022graph}, combinatorial optimization~\citep{Cap+2023}, and social-network analysis~\citep{Eas+2010}, highlighting the need for machine learning techniques designed explicitly for graphs. In recent years, \new{message-passing graph neural networks} (MPNNs)~\citep{Kip+2017,Gil+2017,Sca+2009a,Vel+2018} have become the dominant approach in this area, showing promising performance in tasks such as predicting molecular properties~\citep{Kli+2020,Jum+2021} or enhancing combinatorial solvers~\citep{Cap+2023}.

However, MPNNs have a limitation due to their local aggregation mechanism. They focus on encoding local structures, severely limiting their expressive power~\citep{Mor+2019,Mor+2022,xu2018how}. In addition, MPNNs struggle to capture global or long-range information, possibly leading to phenomena like \new{under-reaching}~\citep{Barceló2020The} or~\new{over-squashing}~\citep{Alon2020}. Over-squashing, as explained by~\citet{Alon2020}, refers to excessive information compression from distant nodes due to a source node's extensive receptive field, occurring when too many layers are stacked.

\citet{Topping2021-iy,Bober2022-md} investigated over-squashing from the perspective of Ricci and Forman curvature. Refining~\citet{Topping2021-iy}, \citet{Di_Giovanni2023-up} analyzed how the architectures' width and graph structure contribute to the over-squashing problem, showing that over-squashing happens among nodes with high commute time, stressing the importance of \new{graph rewiring techniques}, i.e., adding edges between distant nodes to make the exchange of information more accessible. In addition, \citet{Deac2022-lu,Shi+2023} utilized expander graphs to enhance message passing and connectivity, while~\citet{Karhadkar2022-uz} resort to spectral techniques, and~\citet{Banerjee2022-yd} proposed a greedy random edge flip approach to overcome over-squashing. Recent work~\citep{Gutteridge2023-wx} aims to alleviate over-squashing by again resorting to graph rewiring. In addition, many studies have suggested different versions of multi-hop-neighbor-based message passing to maintain long-range dependencies~\citep{Abboud2022-sl,Hai+2019,Kli+2019,Xue2023-zf}, which can also be interpreted as a heuristic rewiring scheme. The above works indicate that graph rewiring is an effective strategy to mitigate over-squashing. However, most existing graph rewiring approaches rely on heuristic methods to add edges, potentially not adapting well to the specific data distribution or introducing edges randomly. Furthermore, there is limited understanding to what extent probabilistic rewiring, i.e., adding or removing edges based on the prediction task, impacts the expressive power of a model.
\begin{figure}
	\vspace{-35pt}
	\centering
	\includegraphics[width=1\textwidth]{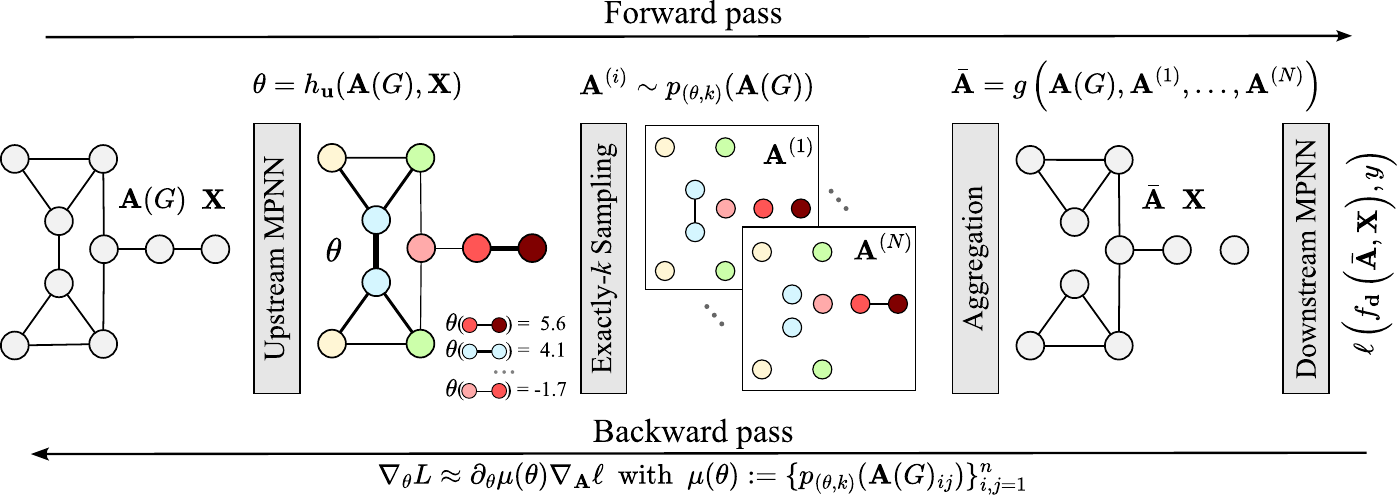}
	\caption{Overview of the probabilistically rewired MPNN framework. PR-MPNNs use an \new{upstream model} to learn priors $\btheta$ for candidate edges, parameterizing a probability mass function conditioned on exactly-$k$ constraints. Subsequently, we sample multiple $k$-edge adjacency matrices (here: $k=1$) from this distribution, aggregate these matrices (here: subtraction), and use the resulting adjacency matrix as input to a \new{downstream model}, typically an MPNN, for the final predictions task. On the backward pass, the gradients of the loss $\ell$ regarding the parameters $\btheta$ are approximated through the derivative of the exactly-$k$ marginals in the direction of the gradients of the point-wise loss $\ell$ regarding the sampled adjacency matrix. We use recent work to make the computation of these marginals exact and differentiable, reducing both bias and variance. \label{fig:overview}}
	\vspace{-10pt}
\end{figure}
In contrast to the above lines of work, graph transformers~\citep{Chen22a, Dwivedi2022-vv, He2022-wq,Mue+2023,Rampasek2022-uc} and similar global attention mechanisms~\citep{Liu2021-nm, Wu2022-vr} marked a shift from local to global message passing, aggregating over all nodes. While not understood in a principled way, empirical studies indicate that graph transformers possibly alleviate over-squashing; see, e.g.,~\citet{Mue+2023}. However, due to their global aggregation mode, computing an attention matrix with $n^2$ entries for an $n$-order graph makes them applicable only to small or mid-sized graphs. Further, to capture non-trivial graph structure, they must resort to hand-engineered positional or structural encodings.

Overall, current strategies to mitigate over-squashing rely on heuristic rewiring methods that may not adapt well to a prediction task or employ computationally intensive global attention mechanisms. Furthermore, the impact of probabilistic rewiring on a model's expressive power remains unclear.

\xhdr{Present work} By leveraging recent progress in differentiable $k$-subset sampling~\citep{ahmed2022simple}, we derive \new{probabilistically rewired MPNNs} (PR-MPNNs). Concretely, we utilize an \new{upstream model} to learn prior weights for candidate edges. We then utilize the weights to parameterize a probability distribution constrained by so-called $k$-subset constraints. Subsequently, we sample multiple $k$-edge adjacency matrices from this distribution and process them using a \new{downstream model}, typically an MPNN, for the final predictions task. To make this pipeline trainable via gradient descent, we adapt recently proposed discrete gradient estimation and tractable sampling techniques~\citep{ahmed2022simple,Niepert2021-pa,xie2019subsets}; see~\cref{fig:overview} for an overview of our architecture. Our theoretical analysis explores how PR-MPNNs overcome MPNNs' inherent limitations in expressive power and identifies precise conditions under which they outperform purely randomized approaches. Empirically, we demonstrate that our approach effectively mitigates issues like over-squashing and under-reaching. In addition, on established real-world datasets, our method exhibits competitive or superior predictive performance compared to traditional MPNN models and graph transformer architectures.

\emph{Overall, PR-MPNNs pave the way for the principled design of more flexible MPNNs, making them less vulnerable to potential noise and missing information.}

\subsection{Related Work}\label{extended_rw}

MPNNs are inherently biased towards encoding local structures, limiting their expressive power~\citep{Mor+2019,Mor+2022,xu2018how}. Specifically, they are at most as powerful as distinguishing non-isomorphic graphs or nodes with different structural roles as the \new{$1$-dimensional Weisfeiler--Leman algorithm}~\citep{Wei+1968}, a simple heuristic for the graph isomorphism problem; see~\cref{sec:wl}. Additionally, they cannot capture global or long-range information, often linked to phenomena such as under-reaching ~\citep{Barceló2020The} or over-squashing ~\citep{Alon2020}, with the latter being heavily investigated in recent works.

\xhdr{Graph rewiring} Several recent works aim to circumvent over-squashing via graph rewiring. Perhaps the most straightforward way of graph rewiring is incorporating multi-hop neighbors. For example, \citet{bruel2022rewiring} rewires the graphs with $k$-hop neighbors and virtual nodes and also augments them with positional encodings. MixHop \citep{Hai+2019}, SIGN \citep{frasca2020sign}, DIGL \citep{Kli+2019}, and SP-MPNN \citep{Abboud2022-sl} can also be considered as graph rewiring as they can reach further-away neighbors in a single layer. Particularly, \citet{Gutteridge2023-wx} rewires the graph similarly to \citet{Abboud2022-sl} but with a novel delay mechanism, showcasing promising empirical results. Several rewiring methods depend on particular metrics, e.g., Ricci or Forman curvature~\citep{Bober2022-md} and balanced Forman curvature~\citep{Topping2021-iy}. In addition, \citet{Deac2022-lu,Shi+2023} utilize expander graphs to enhance message passing and connectivity, while \citet{Karhadkar2022-uz} resort to spectral techniques, and \citet{Banerjee2022-yd} propose a greedy random edge flip approach to overcome over-squashing. Refining \citet{Topping2021-iy}, \citet{Di_Giovanni2023-up} analyzed how the architectures' width and graph structure contribute to the over-squashing problem, showing that over-squashing happens among nodes with high commute time, stressing the importance of rewiring techniques. Contrary to our proposed method, these strategies to mitigate over-squashing either rely on heuristic rewiring methods or use purely randomized approaches that may not adapt well to a given prediction task. Furthermore, the impact of existing rewiring methods on a model's expressive power remains unclear and we close this gap with our work.

There also exists a large set of works from the field of graph structure learning proposing heuristical graph rewiring approaches; see~\cref{gsl} for details.

\xhdr{Graph transformers} Different from the above, graph transformers~\citep{Dwivedi2022-vv,He2022-wq,Mue+2023,Rampasek2022-uc, Chen22a} and similar global attention mechanisms~\citep{Liu2021-nm, Wu2022-vr} marked a shift from local to global message passing, aggregating over all nodes. While not understood in a principled way, empirical studies indicate that graph transformers possibly alleviate over-squashing; see, e.g.,~\citet{Mue+2023}. However, all transformers suffer from their quadratic space and memory requirements due to computing an attention~matrix.

\section{Background}

In the following, we provide the necessary background.

\xhdr{Notations} Let $\Nb \coloneqq \{ 1,2,3, \dots \}$. For $n \geq 1$, let $[n] \coloneqq \{ 1, \dotsc, n \} \subset \Nb$. We use $\{\!\!\{ \dots\}\!\!\}$ to denote multisets, i.e., the generalization of sets allowing for multiple instances for each of its elements. A \new{graph} $G$ is a pair $(V(G),E(G))$ with \emph{finite} sets of
\new{vertices} or \new{nodes} $V(G)$ and \new{edges} $E(G) \subseteq \{ \{u,v\} \subseteq V(G) \mid u \neq v \}$. If not otherwise stated, we set $n \coloneqq |V(G)|$, and the graph is of \new{order} $n$. We also call the graph $G$ an $n$-order graph. For ease of notation, we denote the edge $\{u,v\}$ in $E(G)$ by $(u,v)$ or $(v,u)$. Throughout the paper, we use standard notations, e.g., we denote the \new{neighborhood} of a vertex $v$ by $N(v)$ and $\ell(v)$ denotes its discrete vertex label, and so on; see~\cref{notation_app} for details.

\xhdr{\texorpdfstring{$1$-dimensional Weisfeiler--Leman}{1-dimensional Weisfeiler--Leman} algorithm}\label{sec:wl}
The \wlone{} or color refinement is a well-studied heuristic for the graph isomorphism problem, originally proposed by~\citet{Wei+1968}.
Formally, let $G = (V(G),E(G),\ell)$ be a labeled graph. In each iteration, $t > 0$, the \wlone{} computes a node coloring $C^1_t \colon V(G) \to \Nb$, depending on the coloring of the neighbors. That is, in iteration $t>0$, we set
\begin{equation*}
	C^1_t(v) \coloneqq \REL\Big(\!\big(C^1_{t-1}(v),\oms C^1_{t-1}(u) \mid u \in N(v)  \cms \big)\! \Big),
\end{equation*}
for all nodes $v \in V(G)$,
where $\REL$ injectively maps the above pair to a unique natural number, which has not been used in previous iterations. In iteration $0$, the coloring $C^1_{0}\coloneqq \ell$. To test if two graphs $G$ and $H$ are non-isomorphic, we run the above algorithm in ``parallel'' on both graphs. If the two graphs have a different number of nodes colored $c \in \Nb$ at some iteration, the \wlone{} \new{distinguishes} the graphs as non-isomorphic. Moreover, if the number of colors between two iterations, $t$ and $(t+1)$, does not change, i.e., the cardinalities of the images of $C^1_{t}$ and $C^1_{i+t}$ are equal, or, equivalently,
\begin{equation*}
	C^1_{t}(v) = C^1_{t}(w) \iff C^1_{t+1}(v) = C^1_{t+1}(w),
\end{equation*}
for all nodes $v$ and $w$ in $V(G)$, the algorithm terminates. For such $t$, we define the \new{stable coloring}
$C^1_{\infty}(v) = C^1_t(v)$, for $v$ in $V(G)$. The stable coloring is reached after at most $\max \{ |V(G)|,|V(H)| \}$ iterations~\citep{Gro2017}. It is easy to see that the algorithm cannot distinguish all non-isomorphic graphs~\citep{Cai+1992}. Nonetheless, it is a powerful heuristic that can successfully test isomorphism for a broad class of graphs~\citep{Bab+1979}. A function $f \colon V(G) \to \Rb^d$, for $d >0$, is \new{\wlone-equivalent} if $f \equiv C^1_{\infty}$; see~\cref{notation_app} for details.

\xhdr{Message-passing graph neural networks}
\label{sec:gnn}
Intuitively, MPNNs learn a vectorial representation, i.e., a $d$-dimensional real-valued vector, representing each vertex in a graph by aggregating information from neighboring vertices. Let $\mG=(G,\mLG)$ be an attributed graph, following, \citet{Gil+2017} and \citet{Sca+2009}, in each layer, $t > 0$,  we compute vertex features
\begin{equation*}\label{def:gnn}
	\hb_{v}^\tup{t} \coloneqq
	\UPD^\tup{t}\Bigl(\hb_{v}^\tup{t-1},\AGG^\tup{t} \bigl(\oms \hb_{u}^\tup{t-1}
	\mid u\in N(v) \cms \bigr)\Bigr) \in \Rb^{d},
\end{equation*}
where  $\UPD^\tup{t}$ and $\AGG^\tup{t}$ may be differentiable parameterized functions, e.g., neural networks, and $\hb_{v}^\tup{t} = \vec{L}_v$. In the case of graph-level tasks, e.g., graph classification, one uses
\begin{equation*}\label{def:readout}
	\hb_G \coloneqq \RO\bigl( \oms \hb_{v}^{\tup{T}}\mid v\in V(G) \cms \bigr) \in \Rb^d,
\end{equation*}
to compute a single vectorial representation based on learned vertex features after iteration $T$. Again, $\RO$  may be a differentiable parameterized function, e.g., a neural network. To adapt the parameters of the above three functions, they are optimized end-to-end, usually through a variant of stochastic gradient descent, e.g.,~\citet{Kin+2015}, together with the parameters of a neural network used for classification or regression.

\section{Probalistically rewired MPNNs}\label{sec:prob_rewiring}

Here, we outline probabilistically rewired MPNNs (PR-MPNNs) based on recent advancements in discrete gradient estimation and tractable sampling techniques~\citep{ahmed2022simple}. Let $\mathfrak{A}_n$ denote the set of adjacency matrices of  $n$-order graphs. Further, let $(G, \vec{X})$ be a $n$-order attributed graph with an adjacency matrix $\vec{A}(G) \in \mathfrak{A}_n$ and node attribute matrix $\vec{X} \in \mathbb{R}^{n \times d}$, for $d > 0$. A PR-MPNN maintains a (parameterized) \emph{upstream model} $h_{\bv} \colon \mathfrak{A}_n \times \mathbb{R}^{n \times d} \rightarrow \Theta$, typically a neural network, parameterized by $\bv$, mapping an adjacency matrix and corresponding node attributes to unnormalized edge priors $\btheta \in \Theta \subseteq \mathbb{R}^{n \times n}$.

In the following, we use the \emph{priors} $\btheta$ as parameters of a (conditional) probability mass function,
\begin{equation*}
	\label{eq:constrained-distribution}
	p_{\btheta}(\vec{A}(H)) \coloneqq \prod_{i,j=1}^n p_{\theta_{ij}}(\vec{A}(H)_{ij}),
\end{equation*}
assigning a probability to each adjacency matrix in $\mathfrak{A}_n$, where $p_{\theta_{ij}}(\vec{A}(H)_{ij}=1) = \sigmoid(\theta_{ij})$ and $p_{\theta_{ij}}(\vec{A}(H)_{ij}=0) = 1 - \sigmoid(\theta_{ij})$. Since the parameters $\btheta$ depend on the input graph $G$, we can view the above probability as a conditional probability mass function conditioned on the graph $G$.

Unlike previous probabilistic rewiring approaches, e.g.,~\cite{franceschi2019learning}, we introduce dependencies between the graph's edges by conditioning the probability mass function $p_{\theta_{ij}}(\vec{A}(H))$ on a \emph{$k$-subset constraint}. That is, the probability of sampling any given $k$-edge adjacency matrix $\vec{A}(H)$, becomes
\begin{equation}
	\label{def-p-k-theta}
	p_{(\btheta,k)}(\vec{A}(H)) \coloneqq \left\lbrace
	\begin{array}{ll}
		\nicefrac{p_{\btheta}(\vec{A}(H))}{Z} & \text{if } \norm{\vec{A}(H)}_1 = k, \\
		0                                     & \text{otherwise},
	\end{array}
	\right.
	\mbox{ with } \ \ \ Z \coloneqq \sum_{\vec{B} \in \mathfrak{A}_n \colon \norm{\vec{B}}_1 =k} p_{\btheta}(\vec{B}). \ \
\end{equation}
The original graph $G$ is now rewired into a new adjacency matrix $\bar{\vec{A}}$ by combining $N$ samples $\vec{A}^{(i)}  \sim p_{(\btheta,k)}(\vec{A}(G))$ for $i \in [N]$ together with the original adjacency matrix $\vec{A}(G)$ using a differentiable aggregation function $g \colon \mathfrak{A}_n^ {(N + 1)} \rightarrow \mathfrak{A}_n$, i.e., $\bar{\vec{A}} \coloneqq g(\vec{A}(G), \vec{A}^{(1)}, \dots, \vec{A}^{(N)}) \in \mathfrak{A}_n$. Subsequently, we use the resulting adjacency matrix as input to a \new{downstream model} $f_{\vec{d}}$, parameterized by $\vec{d}$, typically an MPNN, for the final predictions task.

We have so far assumed that the upstream MPNN computes one set of priors $h_{\bv} \colon \mathfrak{A}_n \times \mathbb{R}^{n \times d} \rightarrow \mathbb{R}^{n \times n}$ which we use to generate a new adjacency matrix $\bar{\vec{A}}$ through sampling and then aggregating the adjacency matrices $\vec{A}^{(1)}, \dots, \vec{A}^{(N)}$. In~\cref{sec:experiments}, we show empirically that having multiple sets of priors from which we sample is beneficial. Multiple sets of priors mean that we learn an upstream model $h_{\bv} \colon \mathfrak{A}_n \times \mathbb{R}^{n \times d} \rightarrow \mathbb{R}^{n \times n\times M}$ where $M$ is the number of priors. We can then sample and aggregate the adjacency matrices from these multiple sets of priors.

\xhdr{Learning to sample} To learn the parameters of the up- and downstream model $\bomega = (\bv, \bu)$ of the PR-MPNN architecture, we minimize the expected loss
\begin{align*}\label{eqn:Loss}
	 & L(\vec{A}(G), \vec{X}, y; \bomega) \coloneqq \mathbb{E}_{\vec{A}^{(i)} \sim p_{(\btheta,k)}(\vec{A}(G))} \left[  \ell \left( f_{\bu}\left(g\left(\vec{A}(G), \vec{A}^{(1)}, \dots, \vec{A}^{(N)}  \right),  \vec{X}\right), y\right) \right],
\end{align*}
with $y \in \mathcal{Y}$, the targets, $\ell$ a point-wise loss such as the cross-entropy or MSE, and $\btheta = h_{\bv}(\vec{A}(G), \vec{X})$.
To minimize the above expectation using gradient descent and backpropagation, we need to efficiently draw Monte-Carlo samples from $p_{(\btheta,k)}(\vec{A}(G))$ and estimate $\nabla_{\btheta} L$ the gradients of an expectation regarding the parameters $\btheta$ of the distribution $p_{(\btheta,k)}$.

\xhdr{Sampling}
To sample an adjacency matrix $\vec{A}^{(i)}$ from $p_{(\btheta,k)}(\vec{A}(G))$ conditioned on $k$-edge constraints, and to allow PR-MPNNs to be trained end-to-end,
we use \textsc{Simple}~\citep{ahmed2022simple}, a recently proposed gradient estimator.
Concretely, we can use \textsc{Simple} to sample \emph{exactly} from the $k$-edge adjacency matrix distribution $p_{(\btheta,k)}(\vec{A}(G))$ on the forward pass. On the backward pass, we compute  the approximate gradients of the loss (which is an expectation over a discrete probability mass function) regarding the prior weights $\btheta$ using
\begin{equation*}
	\nabla_{\btheta} L \approx \partial_{\btheta} \mu({\btheta}) \nabla_{\vec{A}} \ell
	~~\text{with}~~ \mu({\btheta}) \coloneqq \{ p_{(\btheta,k)}(\vec{A}(G)_{ij}) \}_{i, j = 1}^n \in \mathbb{R}^{n \times n},
\end{equation*}
with an exact and efficient computation of the marginals $\mu({\btheta})$ that is differentiable on the backward pass, achieving lower bias and variance. We show empirically that \textsc{Simple} \citep{ahmed2022simple} outperforms other sampling and gradient approximation methods such as \textsc{Gumbel SoftSub-ST} \citep{xie2019subsets} and \textsc{I-MLE} \citep{Niepert2021-pa}, improving accuracy without incurring a computational overhead.

\xhdr{Computational complexity}
The vectorized complexity of the exact sampling and marginal inference step is $\bigO(\log k \log l)$, where $k$ is from our $k$-subset constraint, and $l$ is the maximum number of edges that we can sample. Assuming a constant number of layers, PR-MPNN's worst-case training complexity is $\bigO(l)$ for both the upstream and downstream models. Let $n$ be the number of nodes in the initial graph, and $l=\max(\{ l_{\text{add}}, l_{\text{rm}} \})$, with $l_{\text{add}}$ and $l_{\text{rm}}$ being the maximum number of added and deleted edges. If we consider all of the possible edges for $l_{\text{add}}$, the worst-case complexity becomes $\bigO(n^2)$. Therefore, to reduce the complexity in practice, we select a subset of the possible edges using simple heuristics, such as considering the top $l_{\text{add}}$ edges of the most distant nodes.
During inference, since we do not need gradients for edges not sampled in the forward pass, the complexity is $\bigO(l)$ for the upstream model and $\bigO(L)$ for the downstream model, with $L$ being the number of edges in the rewired graph.

\section{Expressive Power of Probabilistically Rewired MPNNs}
\label{sec:theory}

We now, for the first time, explore the extent to which probabilistic MPNNs overcome the inherent limitations of MPNNs in expressive power caused by the equivalence to \wlone{} in distinguishing non-isomorphic graphs~\citep{Xu+2018,Mor+2019}. Moreover, we identify formal conditions under which PR-MPNNs outperform popular randomized approaches such as those dropping nodes and edges uniformly at random. We first make precise what we mean by probabilistically separating graphs by introducing a probabilistic and generally applicable notion of graph separation.

Let us assume a conditional probability mass function $p \colon \mathfrak{A}_n \to [0,1]$ conditioned on a given $n$-order graph,  defined over the set of adjacency matrices of $n$-order graphs. In the context of PR-MPNNs, $p$ is the probability mass function defined in \cref{eq:constrained-distribution} but it could also be any other conditional probability mass function over graphs. Moreover, let $f \colon \mathfrak{A}_n \to \Rb^d$, for $d > 0$, be a permutation-invariant, parameterized function mapping a sampled graph's adjacency matrix to a vector in $\Rb^d$. The function $f$ could be the composition of an aggregation function $g$ that removes the sampled edges from the input graph $G$ and of a downstream MPNN. Now, the conditional probability mass function $p$ \emph{separates} two graphs $G$ and $H$ with probability $\rho$ \new{with respect to $f$} if
\begin{equation*}
	\mathbb{E}_{\bar{G} \sim p(\cdot \mid G) , \bar{H} \sim p(\cdot \mid H)} \left[ f( \vec{A}(\bar{G})) \neq   f(\vec{A}(\bar{H})) \right] = \rho,
\end{equation*}
that is, if in expectation over the conditional probability distribution, the vectors $f( \vec{A}(\bar{G}))$ and $f(\vec{A}(\bar{H}))$ are distinct with probability $\rho$.

In what follows, we analyze the case of $p$ being the exactly-$k$ probability distribution defined in Equation~\ref{def-p-k-theta} and $f$ being the aggregation function removing edges and a downstream MPNN. However, our framework readily generalizes to the case of node removal, and we provide these theoretical results in the appendix. Following~\cref{sec:prob_rewiring}, we sample adjacency matrices with exactly $k$ edges and use them to remove edges from the original graph. We aim to understand the separation properties of the probability mass function $p_{(k,\btheta)}$ in this setting and for various types of graph structures. Most obviously, we do not want to separate isomorphic graphs and, therefore, remain isomorphism invariant, a desirable property of MPNNs.
\begin{theorem}
	\label{sep-isomorphic-discrete_main}
	For sufficiently large $n$, for every $\varepsilon \in (0, 1)$ and $k > 0$, we have that for almost all pairs, in the sense of~\cite{Bab+1980}, of isomorphic $n$-order graphs $G$ and $H$ and all permutation-invariant, \wlone-equivalent functions  $f \colon \mathfrak{A}_n \to \Rb^d$, $d > 0$, there exists a probability mass function $p_{(\btheta,k)}$ that separates the graph $G$ and $H$ with probability at most $\varepsilon$ with respect to $f$.
\end{theorem}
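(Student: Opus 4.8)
The plan is to use the fact that for almost all graphs the $1$-WL stable coloring already singles out a canonical set of edges to delete, to choose priors that make the constrained exactly-$k$ distribution collapse onto the corresponding single rewiring, and then to close with a union bound. The starting point is the theorem of \citet{Bab+1980}: for almost all $n$-order graphs $G$ the stable coloring $C^1_\infty$ is \emph{discrete}, i.e.\ it assigns pairwise distinct colors to the vertices; since amenability is an isomorphism invariant, the same holds for almost all pairs $(G,H)$ of isomorphic graphs, and for $n$ large such $G$ additionally satisfy $|E(G)|\ge k$. Fix such a pair, with isomorphism $\pi\colon V(G)\to V(H)$. A discrete stable coloring linearly orders $V(G)$ (the colors are natural numbers), hence linearly orders $E(G)$ (order each edge by the sorted pair of colors of its endpoints); let $S=\{e_1,\dots,e_k\}\subseteq E(G)$ be the $k$ smallest edges in this order and write $G-S$ for $G$ with the edges of $S$ deleted. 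Because $1$-WL colors are preserved by isomorphisms, the analogous canonical set for $H$ is $S_H=\{\pi(e):e\in S\}$, so $\pi$ restricts to an isomorphism $G-S\cong H-S_H$.

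Next I would concentrate the distribution on $S$ by using extreme logits: set $\theta_{ij}=+T$ when $\{v_i,v_j\}\in S$ and $\theta_{ij}=-T$ otherwise, for a threshold $T=T(n,k,\varepsilon)$ fixed later. Writing $\vec A_S$ for the $k$-edge adjacency matrix with edge set $S$ and using $1-\sigmoid(-T)=\sigmoid(T)$, one computes $p_{\btheta}(\vec A_{S'})/p_{\btheta}(\vec A_S)=e^{-2jT}$ whenever $S'$ is a $k$-edge set obtained from $S$ by swapping $j\ge 1$ edges; summing over all $k$-subsets via Vandermonde's identity gives $p_{(\btheta,k)}(\vec A_S)=p_{\btheta}(\vec A_S)/Z\ge\bigl(1+\tbinom{\binom n2}{k}e^{-2T}\bigr)^{-1}$, which tends to $1$ as $T\to\infty$. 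Choose $T$ so that this is at least $1-\varepsilon/2$. Transporting $\btheta$ along $\pi$ yields the canonical prior for $H$, which analogously puts mass at least $1-\varepsilon/2$ on $\vec A_{S_H}$. (Moreover, since $G$ is amenable this $\btheta$ is realizable by a $1$-WL-expressive upstream MPNN that reads the two endpoint embeddings and the adjacency entry of each candidate edge, so the construction stays inside the PR-MPNN family.)

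The union bound then finishes the argument. Let $\bar G$ and $\bar H$ be independent, drawn from $p_{(\btheta,k)}(\cdot\mid G)$ and $p_{(\btheta,k)}(\cdot\mid H)$ respectively; with probability at least $1-\varepsilon/2$ each we have $\bar G=G-S$ and $\bar H=H-S_H$, and on that event $f(\vec A(\bar G))=f(\vec A(G-S))=f(\vec A(H-S_H))=f(\vec A(\bar H))$, using only that $f$ is permutation invariant and $G-S\cong H-S_H$ (so $1$-WL equivalence of $f$ is not needed beyond permutation invariance). Hence $\mathbb{E}\bigl[\mathbb{1}\{f(\vec A(\bar G))\neq f(\vec A(\bar H))\}\bigr]\le\varepsilon$, i.e.\ $p_{(\btheta,k)}$ separates $G$ and $H$ with probability at most $\varepsilon$ with respect to $f$.

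The main obstacle is conceptual rather than computational: equivariance of the construction immediately makes $f(\vec A(\bar G))$ and $f(\vec A(\bar H))$ \emph{identically distributed}, but that implies nothing about $\mathbb{P}(f(\vec A(\bar G))\neq f(\vec A(\bar H)))$ for independent copies of a non-degenerate random variable. The real work is therefore (i) using amenability to pin down an isomorphism-equivariant target rewiring, and (ii) verifying that driving the logits to $\pm\infty$ concentrates the exactly-$k$ constrained distribution onto that target at a rate beating any fixed $\varepsilon$; everything else is the routine normalizer computation and the union bound above.
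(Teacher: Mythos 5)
Your proposal is correct and follows essentially the same route as the paper's proof: invoke \citet{Bab+1980} to get a discrete \wlone{} coloring for almost all pairs, use it to pin down a canonical, isomorphism-respecting set of $k$ edges, drive the priors so that the exactly-$k$ distribution concentrates on that set with probability close to $1$ in each graph, and conclude via a product/union bound that the removed edges are isomorphic with probability at least $1-\varepsilon$, whence $f$ cannot separate the resulting graphs. The only differences are cosmetic (you allocate $\varepsilon/2$ per graph and union-bound where the paper uses $\sqrt{1-\varepsilon}$ per graph and multiplies, and your normalizer estimate is a cleaner version of the paper's bound on $Z$).
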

Theorem~\ref{sep-isomorphic-discrete_main} relies on the fact that most graphs have a discrete \wlone {} coloring. For graphs where the \wlone{} stable coloring consists of a discrete and non-discrete part, the following result shows that there exist distributions $p_{(\btheta, k)}$ not separating the graphs based on the partial isomorphism corresponding to the discrete coloring.
\begin{proposition}
	\label{sep-isomorphic-discrete-non-discrete}
	Let $\varepsilon \in (0, 1)$, $k > 0$, and let $G$ and $H$ be graphs with identical \wlone{} stable colorings. Let $V_G$ and $V_H$ be the subset of nodes of $G$ and $H$ that are in color classes of cardinality $1$. Then, for all choices of \wlone-equivalent functions $f$, there exists a conditional probability distribution $p_{(\btheta,k)}$ that separates the graphs $G[V_G]$ and $H[V_H]$ with probability at most $\varepsilon$ with respect to $f$.
\end{proposition}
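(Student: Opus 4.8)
The plan is to reduce the proposition to a single structural fact — that the induced subgraphs on the singleton color classes are already isomorphic, $G[V_G] \cong H[V_H]$ — and then to exhibit priors $\btheta$ under which the rewiring preserves this isomorphism, so that the separation probability is in fact $0 \le \varepsilon$.

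First I would set up the canonical bijection. Since $G$ and $H$ have identical \wlone{} stable colorings, the multiset $\oms C^1_\infty(v) \mid v \in V(G) \cms$ equals $\oms C^1_\infty(v) \mid v \in V(H) \cms$; in particular a color class has cardinality one in $G$ iff it has cardinality one in $H$, which matches up $V_G$ and $V_H$ node-by-node via a bijection $\phi\colon V_G \to V_H$ sending each singleton color class of $G$ to the singleton color class of $H$ with the same color. The key claim is that $\phi$ is a graph isomorphism $G[V_G] \to H[V_H]$. I would argue this from stability of the coloring: for distinct $u, v \in V_G$ we have $C^1_\infty(u) \neq C^1_\infty(v)$, and both are singleton classes, so the number of neighbors of $u$ with color $C^1_\infty(v)$ equals $\Ibb[u \sim_G v]$; since the partition induced by $C^1_\infty$ is equitable, this count is determined by the ordered pair of colors and is therefore the same in $G$ and in $H$, giving $\Ibb[u \sim_G v] = \Ibb[\phi(u) \sim_H \phi(v)]$. (Equivalently, one may use the characterization of \wlone-indistinguishability by a color-respecting doubly stochastic $\vec{S}$ with $\vec{S}\vec{A}(G) = \vec{A}(H)\vec{S}$: the blocks of $\vec{S}$ indexed by singleton classes are forced to be the permutation $\vec{P}_\phi$, and reading off the corresponding entries of the intertwining relation yields $\vec{A}(G)_{uv} = \vec{A}(H)_{\phi(u)\phi(v)}$ for all $u, v \in V_G$.)

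Next I would choose $\btheta$ to be any isomorphism-equivariant prior — concretely the \wlone-equivariant choice $\theta_{ij} \coloneqq \beta(C^1_\infty(i), C^1_\infty(j))$ for a fixed symmetric $\beta$, with the diagonal set so that no self-loops are sampled. Equivariance of the prior together with the isomorphism $\phi$ makes the conditional law $p_{(\btheta,k)}(\cdot \mid \cdot)$ commute with $\phi$. Unfolding the separation definition, $\bar G \sim p_{(\btheta,k)}(\cdot \mid G[V_G])$ and $\bar H \sim p_{(\btheta,k)}(\cdot \mid H[V_H])$, and pushing a sample drawn for $G[V_G]$ through $\phi$ is a sample for $H[V_H]$; this couples the two draws so that $\vec{A}(\bar H) = \vec{P}_\phi \vec{A}(\bar G) \vec{P}_\phi\trans$ almost surely. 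Since $f$ is permutation-invariant, $f(\vec{A}(\bar G)) = f(\vec{A}(\bar H))$ almost surely, so $p_{(\btheta,k)}$ separates $G[V_G]$ and $H[V_H]$ with probability $0 \le \varepsilon$. The same conclusion holds in the variant where the $N$ samples are drawn on the full graphs $G, H$ and one restricts to $V_G, V_H$ afterwards: one builds a bijection between the candidate-edge slots of $G$ and those of $H$ that respects the color-pair blocks (possible because the class cardinalities agree) and restricts to $\phi \times \phi$ on slots internal to $V_G$, couples the $N$ samples through it, and uses that edge removal commutes with passing to an induced subgraph.

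The step I expect to be the main obstacle is the isomorphism claim $G[V_G] \cong H[V_H]$: the delicate point is that ``identical \wlone{} stable colorings'' must be leveraged to pin down the adjacencies \emph{among} singleton-colored vertices, not merely their individual colors — which is exactly where equitability of the stable partition (or, equivalently, the doubly stochastic intertwiner) is needed. Everything downstream is routine bookkeeping about equivariant distributions and couplings.
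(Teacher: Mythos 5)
Your first step---establishing the isomorphism $\phi \colon G[V_G] \to H[V_H]$ via equitability of the stable partition---is correct and is in fact more careful than the paper, which simply asserts the existence of this isomorphism in one line. The problem lies in your second step. The paper's notion of separation is an expectation over \emph{two independent draws} $\bar{G} \sim p(\cdot \mid G)$ and $\bar{H} \sim p(\cdot \mid H)$; you do not get to choose a joint law. Equivariance of your prior $\theta_{ij} = \beta(C^1_\infty(i), C^1_\infty(j))$ gives you equality of the two conditional distributions up to $\phi$, i.e.\ $\phi_\# p_{(\btheta,k)}(\cdot \mid G[V_G]) = p_{(\btheta,k)}(\cdot \mid H[V_H])$, but that does \emph{not} imply the independent samples satisfy $\vec{A}(\bar{H}) = \vec{P}_\phi \vec{A}(\bar{G}) \vec{P}_\phi\trans$ almost surely. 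If the distribution places mass on several distinct $k$-subsets whose removal yields \wlone-distinguishable graphs (e.g.\ removing different edges from a discretely colored path), then the two independent draws will frequently land on non-corresponding subsets and $f$ \emph{will} separate the results. On the discrete part every edge slot has a unique color pair, so ``equivariant'' imposes no constraint at all---an arbitrary symmetric $\beta$ gives an arbitrary, generally spread-out distribution, and your conclusion of separation probability $0$ fails.

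The fix is the concentration argument the paper uses for \cref{sep-isomorphic-discrete_main}: choose the priors so that one designated $k$-subset $S$ of candidate edges of $G[V_G]$ (and its image $\phi(S)$ in $H[V_H]$) is sampled with probability at least $\sqrt{1-\varepsilon}$ under each conditional. By independence, both draws then select corresponding subsets with probability at least $1-\varepsilon$, in which event the rewired graphs are isomorphic and the permutation-invariant, \wlone-equivalent $f$ cannot distinguish them; the separation probability is therefore at most $\varepsilon$. Note also that probability exactly $0$ is not attainable here, since with finite priors the exactly-$k$ distribution assigns positive mass to every admissible $k$-subset---this is precisely why the statement is ``at most $\varepsilon$'' and why the $\sqrt{1-\varepsilon}$ bookkeeping appears in the paper's argument.
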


Existing methods such as DropGNN~\citep{Pap+2021} or DropEdge~\citep{Ron+2020} are more likely to separate two (partially) isomorphic graphs by removing different nodes or edges between discrete color classes, i.e., on their (partially) isomorphic subgraphs. For instance, in the appendix, we prove that pairs of graphs with $m$ edges exist where the probability of non-separation under uniform edge sampling is at most $1/m$. This is undesirable as it breaks the MPNNs' permutation-invariance in these parts.

Now that we have established that distributions with priors from upstream MPNNs are more likely to preserve (partial) isomorphism between graphs, we turn to analyze their behavior in separating the non-discrete parts of the coloring. The following theorem shows that PR-MPNNs are more likely to separate non-isomorphic graphs than probability mass functions that remove edges or nodes uniformly at random.
\begin{theorem}
	\label{thm:separation-compare-random-main}%
	For every $\varepsilon \in (0, 1)$ and every $k > 0$, there exists a pair of non-isomorphic graphs $G$ and $H$ with identical and non-discrete \wlone{} stable colorings such that for every \wlone-equivalent function $f$,
	\begin{enumerate}
		\item[(1)] there exists a probability mass function $p_{(k,\btheta)}$ that separates $G$ and $H$ with probability at least $(1-\varepsilon)$ with respect to $f$;
		\item[(2)] removing edges uniformly at random separates $G$ and $H$ with probability at most $\varepsilon$ with respect to $f$.
	\end{enumerate}
\end{theorem}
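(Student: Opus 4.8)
The plan is to construct an explicit pair of non-isomorphic graphs $G$ and $H$ that share the same non-discrete \wlone{} stable coloring, and on which a carefully chosen set of priors $\btheta$ produces a deterministic, structure-distinguishing rewiring while uniform edge removal almost surely destroys any distinguishing structure. The natural candidate is a pair built from two small ``gadgets'' that \wlone{} cannot tell apart locally --- for instance, taking $G$ to contain a $6$-cycle $C_6$ and $H$ to contain two disjoint triangles $2 \cdot C_3$ (the classic \wlone-indistinguishable pair), possibly padded with additional regular structure so that the stable coloring is uniform (non-discrete) and identical across both graphs. The key point is that these graphs differ in a global property --- e.g., the lengths of cycles, or connectedness --- that becomes detectable by an MPNN once a single well-placed edge is removed or added.

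First I would make the stable-coloring hypothesis precise: verify that $G$ and $H$ are regular of the same degree with the same vertex labels, so that $C^1_\infty$ assigns one color to all vertices in each graph and the colorings coincide; this is what forces any \wlone-equivalent $f$ to be ``blind'' at the level of the original graphs. Second, for part (1), I would design the upstream priors $\btheta$ so that $\sigmoid(\theta_{ij})$ is essentially $1$ on exactly one target edge (or one target non-edge, in the edge-addition variant) and essentially $0$ elsewhere; since the priors may depend on the input graph, I can let the upstream model pick different targets in $G$ versus $H$, but the cleaner route is to pick a single ``position'' definable from the shared coloring whose removal yields graphs with different \wlone{} colorings (e.g., removing one edge from $C_6$ gives a path $P_6$, while removing one edge from $2\cdot C_3$ gives $P_3 \sqcup C_3$, and these are \wlone-distinguishable). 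Taking $k=1$ and concentrating the mass on that edge makes the sampled rewiring deterministic up to probability $\varepsilon$, so $f$ applied to the rewired graphs differs with probability at least $1-\varepsilon$; for general $k$ I would pad with $k-1$ ``dummy'' high-probability edges whose removal is isomorphism-invariant on both sides. Third, for part (2), I would compute the probability that uniform removal of $k$ edges out of the $m$ present edges actually separates the two graphs: by the symmetry (vertex- and edge-transitivity) of the gadgets, almost every $k$-subset removed from $G$ is related by an automorphism to the corresponding removal from $H$, or else lands in a regime where the resulting colorings still agree; bounding the fraction of ``bad'' (separating) $k$-subsets by a quantity that shrinks with the size of the padding lets me push it below $\varepsilon$. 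This mirrors the $1/m$ bound mentioned in the text for the partial-isomorphism case.

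The main obstacle I anticipate is part (2): making rigorous the claim that uniform edge removal \emph{fails} to separate with probability at least $1-\varepsilon$. It is not enough that the two graphs are edge-transitive --- after removing $k$ edges one must argue that, for the overwhelming majority of choices, the multiset of resulting \wlone{} colorings is the same for $G$ and $H$, which requires a counting argument over $k$-subsets of edges together with a case analysis of how the stable coloring of $G \setminus S$ compares to that of $H \setminus S'$. The cleanest way to control this is to choose the construction so that $G$ and $H$ agree except on a vanishingly small ``core'' (the $C_6$ vs.\ $2 C_3$ part) while most edges live in a large shared symmetric region; then a uniformly random $k$-subset touches the core with probability $O(k \cdot (\text{core size})/m)$, which we drive below $\varepsilon$ by enlarging $m$, and when the core is untouched the rewired graphs are genuinely isomorphic so $f$ cannot separate them. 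A secondary, more technical point is ensuring that after adding the padding the stable coloring remains non-discrete and identical on both graphs; I would handle this by using a vertex-transitive padding (e.g., a large complete bipartite or circulant component attached symmetrically), and by invoking the fact that \wlone{} cannot refine a regular graph with uniform labels. Assembling these pieces --- explicit gadget, concentrated priors for (1), subset-counting bound for (2) --- yields the theorem.
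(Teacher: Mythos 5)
Your high-level strategy is the right one and matches the paper's: build a pair whose difference is concentrated in a small ``core,'' let the learned prior put essentially all its mass on the core (giving separation $\geq 1-\varepsilon$), and dilute the uniform sampler with a large padding region so that it touches the core with probability $\leq \varepsilon$. Your part-(2) counting argument is essentially the paper's.

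However, there is a genuine gap in part (1), and it comes from your choice of gadget. You propose $C_6$ versus $2C_3$, padded by a vertex-transitive regular region so that the stable coloring is \emph{uniform} (a single color class) on both graphs. But the priors $\btheta$ in this theorem are not arbitrary numbers attached to edge indices: they are produced by an upstream MPNN, so the achievable priors are exactly the \wlone-equivalent edge colorings (\cref{edge-wl-universal}), i.e., functions that are constant on the edge color classes of the stable coloring. If the stable coloring has a single class, every realizable prior is constant on all edges, $p_{(\btheta,k)}$ collapses to the uniform $k$-subset distribution, and your part (1) is impossible --- this is precisely the content of the paper's negative result (\cref{fail}), which uses regular graphs with one color class as the counterexample. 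Your remark that you could ``let the upstream model pick different targets in $G$ versus $H$'' or concentrate mass ``on exactly one target edge'' is therefore not available in your construction. The fix, which is what the paper does, is to choose gadgets whose shared stable coloring is non-discrete but has \emph{several} color classes: the distinguishing edges must lie between color classes that differ from those of the padding edges, so that an upstream MPNN can legitimately up-weight the former and down-weight the latter, while the uniform sampler is still swamped by the padding. Once the gadget is repaired this way (and generalized for $k>1$, e.g., via the paper's $(k+1)$-cycle constructions rather than $k-1$ ``dummy'' edges, which again would need to be WL-definable), the rest of your argument goes through.
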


Finally, we can also show a negative result, namely that there exist classes of graphs for which PR-MPNNs cannot do better than random sampling.
\begin{proposition}\label{fail}
	For every $k > 0$, there exist non-isomorphic graphs $G$ and $H$ with identical \wlone{} colorings such that every probability mass function $p_{(\btheta, k)}$ separates the two graphs with the same probability as the distribution that samples edges uniformly at random.
\end{proposition}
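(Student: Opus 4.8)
The plan is to choose $G$ and $H$ so symmetric that no upstream model can break the symmetry: concretely, I would take both graphs to be $2$-regular with a single $1$-WL color class and, in addition, \emph{edge-transitive}, so that every admissible set of priors $\btheta$ must be constant on the candidate edges, which forces $p_{(\btheta,k)}$ to coincide with uniform $k$-edge sampling. Fix $k > 0$, let $n$ be the smallest multiple of $3$ with $n \geq \max(k,6)$, and set $G \coloneqq C_n$, the $n$-cycle, and $H \coloneqq (n/3)\cdot C_3$, the disjoint union of $n/3$ triangles, each equipped with a constant vertex label.

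First I would dispatch the routine facts. The graphs $G$ and $H$ are non-isomorphic (for $n \geq 6$, $G$ is connected with girth $n$, while $H$ is disconnected with girth $3$); both are $2$-regular with $|E(G)| = |E(H)| = n \geq k$; and since $1$-WL applied to a regular graph with uniform labels never refines the initial coloring, both graphs have the same stable coloring, namely a single (non-discrete) color class of size $n$. In particular the reference distribution ``sample $k$ edges uniformly at random'' is the same kind of object on both sides (a uniform draw from the $\binom{n}{k}$ subsets of $E(G)$, resp.\ of $E(H)$), so the comparison asserted in the statement is well posed.

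The heart of the argument is that for this pair, \emph{every} distribution $p_{(\btheta,k)}$ arising from an upstream model equals uniform $k$-edge sampling. In the edge-removal setting the candidate edges of $G$ are exactly $E(G)$, and $C_n$ is edge-transitive. Since any admissible upstream model is permutation-equivariant and, being an MPNN, no more expressive than $1$-WL, the node embeddings it produces are constant over $V(G)$ (all vertices share a single $1$-WL color), hence the induced prior $\theta_{uv}$, a permutation-respecting function of the endpoint embeddings, takes the same value for every $(u,v) \in E(G)$. By~\cref{def-p-k-theta}, $p_{(\btheta,k)}(\cdot \mid G)$ then assigns equal probability to every $k$-subset of $E(G)$, i.e., it is exactly uniform $k$-edge removal on $G$; the identical reasoning applies to $H$, whose automorphism group acts as $S_3$ on the edges of each triangle and permutes the triangles, so $H$ is edge-transitive too. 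Substituting these two distributions into the definition of separation yields, for every permutation-invariant $1$-WL-equivalent $f$, that $p_{(\btheta,k)}$ separates $G$ and $H$ with respect to $f$ with exactly the probability that uniform edge sampling does, whatever that probability happens to be.

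The step I expect to be the main obstacle is the symmetry argument of the previous paragraph: one must state precisely the class of upstream models that is covered (permutation-equivariant and at most as expressive as $1$-WL) and justify rigorously the implication ``$1$-WL does not distinguish the edges of $G$'' $\Rightarrow$ ``$\btheta$ is constant on $E(G)$,'' using the standard fact that MPNN node features are refined by the $1$-WL coloring together with the observation that edge transitivity collapses the $1$-WL-induced partition of $E(G)$ to a single class. The only remaining points requiring attention are bookkeeping around the candidate edge set (only priors on $E(G)$ can affect which edges are removed, so the baseline is well defined) and the constraint $k \le n$, both of which are taken care of by the choice of $n$ above.
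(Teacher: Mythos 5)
Your proposal is correct and follows essentially the same route as the paper, which also takes a pair of non-isomorphic $d$-regular graphs (so that the \wlone{} stable coloring is a single color class), and concludes that the upstream MPNN must assign identical priors to all edges and therefore behaves as a uniform sampler. Your concrete choice of $C_n$ versus $n/3$ disjoint triangles is a valid instance, and the extra appeal to edge-transitivity is not even needed once the node embeddings are constant, since the prior $\theta_{uv}$ is already a fixed function of two identical embeddings.
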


\begin{table}[t]
	\vspace{-30pt}
	\caption{Comparison between PR-MPNN and baselines on three molecular property prediction datasets. We report results for PR-MPNN with different gradient estimators for $k$-subset sampling: \textsc{Gumbel SoftSub-ST} \citep{gumbel1, gumbel2, xie2019subsets}, \textsc{I-MLE} \citep{Niepert2021-pa}, and \textsc{Simple} \citep{ahmed2022simple} and compare them with the base downstream model, and two graph transformer architectures. The variant using \textsc{SIMPLE} consistently outperforms the base models and is competitive or better than the two graph transformers. We use \textbf{\textcolor{first}{green}} for the best model, \textbf{\textcolor{second}{blue}} for the second-best, and \textbf{\textcolor{third}{red}} for third. We note with \textsc{+ Edge} the instances where edge features are provided and with \textsc{- Edge} when they are not.}
	\label{tab: zinc}
	\vskip 0.15in
	\begin{center}
		\begin{small}
			\begin{sc}
				\resizebox{.8\columnwidth}{!}{ %
					\begin{tabular}{clcccc} %
						\toprule
						                                                 &                                 & \multicolumn{2}{c}{\textsc{Zinc}}                       & OGBG-molhiv                                             & Alchemy                                                                                                                            \\ %
						                                                 &                                 & - Edge $\downarrow$                                     & + Edge $\downarrow$                                     & + Edge $\uparrow$                                      & + Edge $\downarrow$                                                       \\ %
						\midrule
						{\multirow{7}{*}{\rotatebox[origin=c]{90}{GIN backbone}}}
						                                                 & K-ST SAT                        & 0.166\scriptsize$\pm0.007$                              & 0.115\scriptsize$\pm0.005$                              & 0.625\scriptsize$\pm0.039$                             & N/A                                                                       \\ %
						                                                 & K-SG SAT                        & 0.162\scriptsize$\pm0.013$                              & \textcolor{third}{\textbf{0.095\scriptsize$\pm0.002$}}  & 0.613\scriptsize$\pm0.010$                             & N/A                                                                       \\ %
						                                                 & Base                            & 0.258\scriptsize$\pm0.006$                              & 0.207\scriptsize$\pm0.006$                              & \textcolor{third}{\textbf{0.775\scriptsize$\pm0.011$}} & 11.12\scriptsize$\pm0.690$                                                \\ %
						                                                 & Base w. PE                      & 0.162\scriptsize$\pm0.001$                              & 0.101\scriptsize$\pm0.004$                              & 0.764\scriptsize$\pm0.018$                             & 7.197\scriptsize$\pm0.094$                                                \\ %
						                                                 & PR-MPNN$_{\text{Gmb}}$ (ours)   & 0.153\scriptsize$\pm0.003$                              & 0.103\scriptsize$\pm0.008$                              & 0.760\scriptsize$\pm0.025$                             & \textcolor{third}{\textbf{6.858\scriptsize$\pm0.090$}}                    \\ %
						                                                 & PR-MPNN$_{\text{Imle}}$ (ours)  & \textcolor{third}{\textbf{0.151\scriptsize$\pm0.001$}}  & 0.104\scriptsize$\pm0.008$                              & \textcolor{third}{\textbf{0.774\scriptsize$\pm0.015$}} & \textcolor{second}{\textbf{6.692\scriptsize$\pm0.061$}}                   \\ %
						                                                 & PR-MPNN$_{\text{Sim}}$ \ (ours) & \textcolor{second}{\textbf{0.139}\scriptsize$\pm$0.001} & \textcolor{second}{\textbf{0.085}\scriptsize$\pm$0.002} & \textcolor{first}{\textbf{0.795\scriptsize$\pm0.009$}} & \textcolor{first}{\textbf{6.447}}\scriptsize$\pm$\textcolor{first}{0.057} \\ %
						\midrule
						{\multirow{3}{*}{\rotatebox[origin=c]{90}{PNA}}} &
						GPS                                              & N/A                             & \textcolor{first}{\textbf{0.070\scriptsize$\pm0.004$}}  & \textcolor{second}{\textbf{0.788\scriptsize$\pm0.010$}} & N/A                                                                                                                                \\ %
						                                                 & K-ST SAT                        & 0.164\scriptsize$\pm0.007$                              & 0.102\scriptsize$\pm0.005$                              & 0.625\scriptsize$\pm0.039$                             & N/A                                                                       \\ %
						                                                 & K-SG SAT                        & \textcolor{first}{\textbf{0.131\scriptsize$\pm0.002$}}  & \textcolor{third}{\textbf{0.094\scriptsize$\pm0.008$}}  & 0.613\scriptsize$\pm0.010$                             & N/A                                                                       \\ %
						\bottomrule
					\end{tabular}}
			\end{sc}
		\end{small}
	\end{center}
\end{table}

\begin{figure}[t]
	\centering
	\includegraphics[width=1\textwidth]{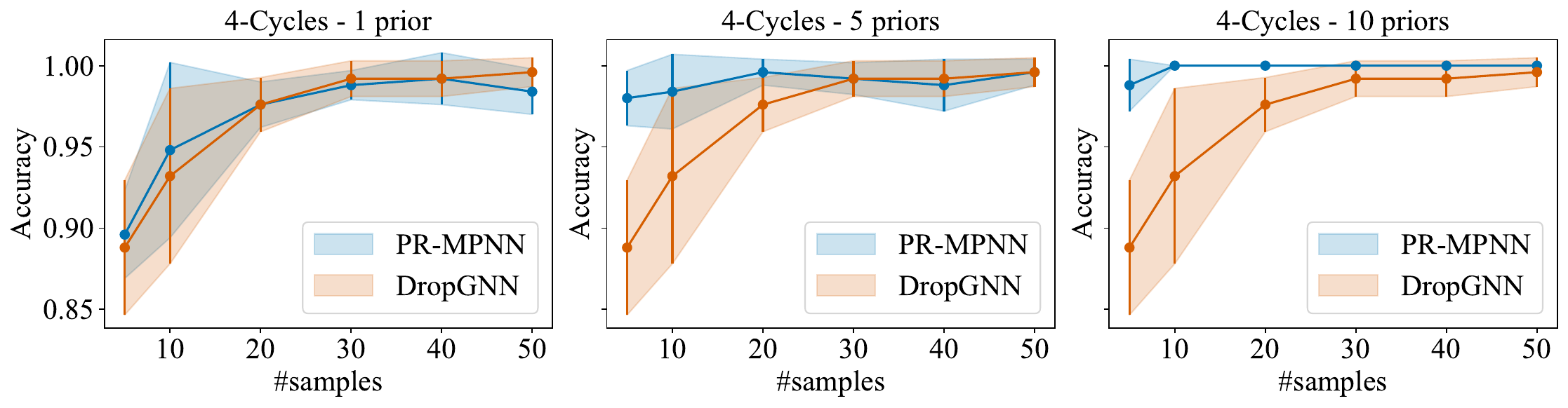}
	\caption{Comparison between PR-MPNN and DropGNN on the \textsc{4-Cycles} dataset. PR-MPNN rewiring is almost always better than randomly dropping nodes, and is always better with $10$ priors.}
	\label{fig:4cycles}
\end{figure}

\section{Experimental Evaluation}
\label{sec:experiments}

Here, we explore to what extent our probabilistic graph rewiring leads to improved predictive performance on synthetic and real-world datasets. Concretely, we answer the following questions.
\begin{description}
	\item[Q1] Can probabilistic graph rewiring mitigate the problems of over-squashing and under-reaching in synthetic datasets?
	\item[Q2] Is the expressive power of standard MPNNs enhanced through probabilistic graph rewiring? That is, can we verify empirically that the separating probability mass function of \Cref{sec:theory} can be learned with PR-MPNNs and that multiple priors help?
	\item[Q3] Does the increase in predictive performance due to probabilistic rewiring apply to (a) graph-level molecular prediction tasks and (b) node-level prediction tasks involving heterophilic data?
\end{description}

An anonymized repository of our code can be accessed at \url{https://anonymous.4open.science/r/PR-MPNN}.

\xhdr{Datasets} To answer \textbf{Q1}, we utilized the  \textsc{Trees-NeighborsMatch} dataset~\citep{Alon2020}. Additionally, we created the \textsc{Trees-LeafCount} dataset to investigate whether our method could mitigate under-reaching issues; see \cref{sup:data} for details. To tackle \textbf{Q2}, we performed experiments with the \textsc{Exp} \citep{Abb+2020} and \textsc{CSL} datasets \citep{Mur+2019b} to assess how much probabilistic graph rewiring can enhance the models' expressivity. In addition, we utilized the \textsc{4-Cycles} dataset from \cite{Lou2019, Pap+2021} and set it against a standard DropGNN model \citep{Pap+2021} for comparison while also ablating the performance difference concerning the number of priors and samples per prior. To answer \textbf{Q3} (a), we used the established molecular graph-level regression datasets \textsc{Alchemy} \citep{chen2019alchemy}, \textsc{Zinc} \citep{zinc1, dwivedi2020benchmarkgnns}, \textsc{ogbg-molhiv} \citep{hu2020ogb}, \textsc{QM9} \citep{Ham+2017}, \textsc{LRGB} \citep{Dwivedi2022-vv} and five datasets from the \textsc{TUDataset} repository \citep{Mor+2020}. To answer \textbf{Q3} (b), we used the \textsc{Cornell}, \textsc{Wisconsin}, \textsc{Texas} node-level classification datasets \citep{webkb}.

\xhdr{Baseline and model configurations} For our upstream model $h_{\bv}$, we use an MPNN, specifically the GIN layer~\citep{xu2018how}. For an edge $(v,w) \in E(G)$, we compute $\btheta_{vw} = \phi([ \mathbf{h}_v^{T} || \mathbf{h}_w^{T} ]) \in \mathbb{R}$, where $[ \cdot || \cdot ]$ is the concatenation operator and $\phi$ is an MLP.
After obtaining the prior $\btheta$, we rewire our graphs by sampling two adjacency matrices for deleting edges and adding new edges, i.e., $g(\vec{A}(G), \vec{A}^{(1)}, \vec{A}^{(2)}) \coloneqq (\vec{A}(G) - \vec{A}^{(1)}) + \vec{A}^{(2)}$ where $\vec{A}^{(1)}$ and $\vec{A}^{(2)}$ are two sampled adjacency matrices with a possibly different number of edges, respectively. Finally, the rewired adjacency matrix (or multiple adjacency matrices) is used in a \emph{downstream model} $f_{\bu} \colon \mathfrak{A}_n \times \mathbb{R}^{n \times d} \rightarrow \mathcal{Y}$, typically an MPNN, with parameters $\bu$ and $\mathcal{Y}$ the prediction target set. For the instance where we have multiple priors, as described in~\cref{sec:prob_rewiring}, we can either aggregate the sampled adjacency matrices $\vec{A}^{(1)}, \dots, \vec{A}^{(N)}$ into a single adjacency matrix $\mathbf{\bar{A}}$ that we send to a downstream model as described in \cref{fig:overview}, or construct a downstream ensemble with multiple aggregated matrices $\mathbf{\bar{A}}_{1}, \dots, \mathbf{\bar{A}}_{M}$. In practice, we always use a downstream ensemble before the final projection layer when we rewire with more than one adjacency matrix, and we do rewiring by both adding and deleting edges, please consult Table \ref{tab:hyperparams} in the Appendix for more details.

All of our downstream models $f_{\vec{d}}$ and base models are MPNNs with GIN layers. When we have access to edge features, we use the GINE variant \citep{Hu2020Strategies} for edge feature processing. For graph-level tasks, we use mean pooling, while for node-level tasks, we take the node embedding $\vec{h}_v^T$ for a node $v$. The final embeddings are then processed and projected to the target space by an MLP.

For \textsc{Zinc}, \textsc{Alchemy}, and \textsc{ogbg-molhiv}, we compare our rewiring approaches with the base downstream model, both with and without positional embeddings. Further, we compare to GPS \citep{Rampasek2022-uc} and SAT \citep{Chen22a}, two state-of-the-art graph transformers. For the \textsc{TUDataset}, we compare with the reported scores from \cite{giusti2023cin} and use the same evaluation strategy as in \citet{xu2018how, giusti2023cin}, i.e., running 10-fold cross-validation and reporting the maximum average validation accuracy.
For different tasks, we search for the best hyperparameters for sampling and our upstream and downstream models. See~\cref{tab:hyperparams} in the appendix for the complete description. For \textsc{Zinc}, \textsc{Alchemy}, and \textsc{ogbg-molhiv}, we evaluate multiple gradient estimators in terms of predictive power and computation time. Specifically, we compare \textsc{Gumbel SoftSub-ST} \citep{gumbel1, gumbel2, xie2019subsets}, \textsc{I-MLE} \citep{Niepert2021-pa}, and \textsc{Simple} \citep{ahmed2022simple}. The results in terms of predictive power are detailed in Table~\ref{tab: zinc}, and the computation time comparisons can be found in~\Cref{runtime} in the appendix. Further experimental results on \textsc{QM9} and \textsc{LRGB} are included in~\cref{qm9lrgb} in the appendix.

\begin{figure}[t]
	\vspace{-35pt}
	\begin{center}
		\begin{minipage}[b]{0.5\columnwidth}
			\centering
			\includegraphics[width=0.8\linewidth]{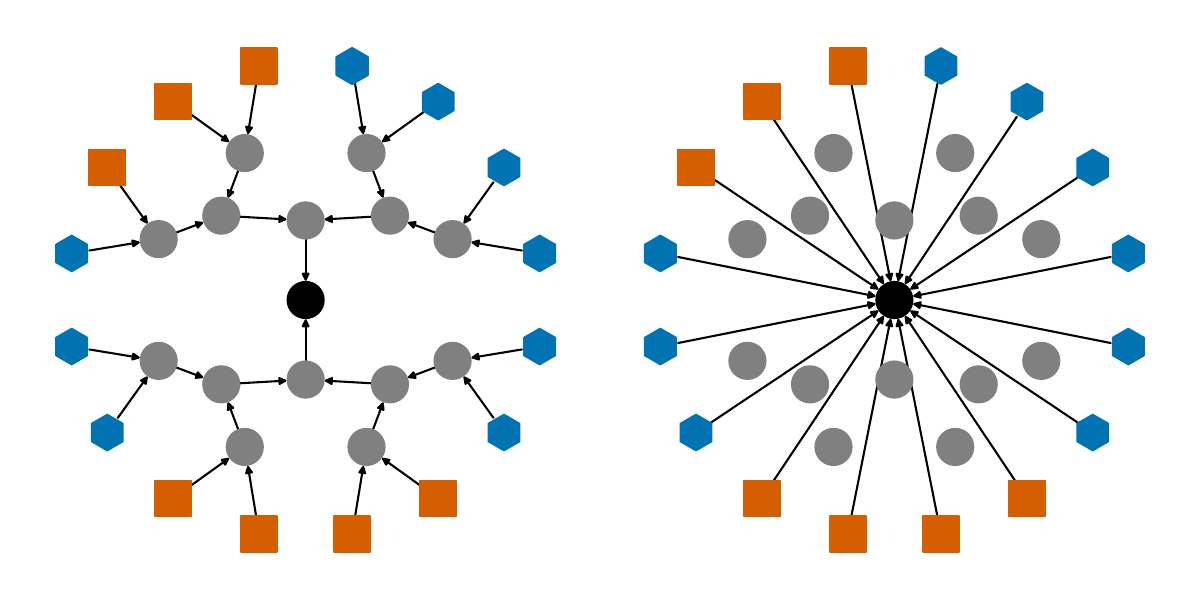}
			\caption{Example graph from the {\sc Trees-LeafCount} test dataset with radius $4$ (left). PR-MPNN rewires the graph, allowing the downstream MPNN to obtain the label information from the leaves in one massage-passing step (right).}
			\label{fig:leafcount}
		\end{minipage}
		\hfill %
		\begin{minipage}[b]{0.44\columnwidth}
			\centering
			\includegraphics[width=0.70\linewidth]{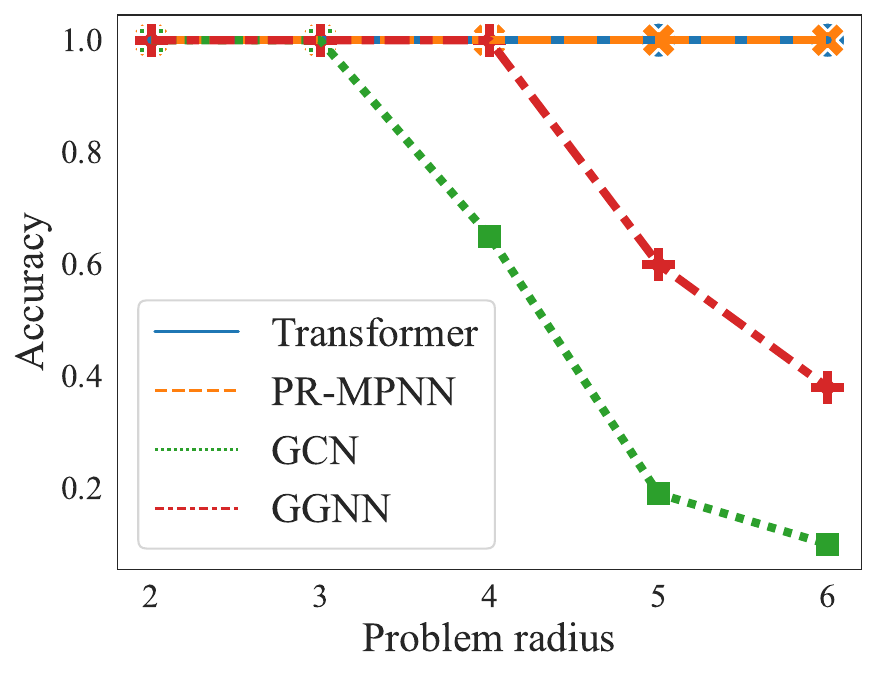}
			\caption{Test accuracy of our rewiring method on the {\sc Trees-NeighborsMatch}~\citep{Alon2020} dataset, compared to the reported accuracies from \cite{Mue+2023}.}
			\label{fig:match}
		\end{minipage}

	\end{center}
	\vskip -0.2in
\end{figure}

\xhdr{Experimental results and discussion}  Concerning \textbf{Q1}, our rewiring method achieves perfect test accuracy up to a problem radius of 6 on both {\sc Trees-NeighborsMatch} and {\sc Trees-LeafCount}, demonstrating that it can successfully alleviate over-squashing and under-reaching, see \cref{fig:match}. For {\sc Trees-LeafCount}, our model can create connections directly from the leaves to the root, achieving perfect accuracy with a downstream model containing a single MPNN layer. We provide a qualitative result in \cref{fig:leafcount} and a detailed discussion in \cref{sup:data}. Concerning \textbf{Q2}, on the \textsc{4-Cycles} dataset, our probabilistic rewiring method matches or outperforms DropGNN. This advantage is most pronounced with 5 and 10 priors, where we achieve 100\% task accuracy using 20 samples, as detailed in Figure~\ref{fig:4cycles}.
On the \textsc{Exp} dataset, we showcase the expressive power of probabilistic rewiring by achieving perfect accuracy, see~\cref{tab:exp}. Besides, our rewiring approach can distinguish the regular graphs from the \textsc{CSL} dataset without any positional encodings, whereas the \wlone-equivalent GIN obtains only random accuracy. Concerning \textbf{Q3} (a), the results in \cref{tab: zinc} show that our rewiring methods consistently outperform the base models on {\sc Zinc}, {\sc Alchemy}, and \textsc{ogbg-molhiv} and are competitive or better than the state-of-the-art GPS and SAT graph transformer methods. On \textsc{TUDataset}, see~\cref{tab:tud}, our probabilistic rewiring method outperforms existing approaches and obtains lower variance on most of the datasets, with the exception being \textsc{NCI1}, where our method ranks second, after the WL kernel. Hence, our results indicate that probabilistic graph rewiring can improve performance for molecular prediction tasks. Concerning \textbf{Q3} (b), we obtain performance gains over the base model and other existing MPNNs, see~\cref{tab: hetero} in the appendix, indicating that data-driven rewiring has the potential of alleviating the \textit{effects} of over-smoothing by removing undesirable edges and making new ones between nodes with similar features. The graph transformer methods outperform the rewiring approach and the base models, except on the \textsc{Texas} dataset, where our method gets the best result. We speculate that GIN's aggregation mechanism for the downstream models is a limiting factor on heterophilic data. We leave the analysis of combining probabilistic graph rewiring with downstream models that address over-smoothing for future investigations.

\begin{table}[t]
	\vspace{-35pt}
	\centering
	\begin{minipage}{0.48\textwidth}
		\centering
		\caption{Comparison between the base GIN model, PR-MPNN, and other more expressive models on the \textsc{Exp} dataset.}
		\begin{small}
			\begin{sc}
				\resizebox{.65\columnwidth}{!}{
					\begin{tabular}{l r}
						\toprule
						\textbf{Model} & Accuracy $\uparrow$            \\
						\midrule
						GIN            & $0.511$\scriptsize $\pm 0.021$ \\
						GIN + ID-GNN   & $1.000$\scriptsize $\pm 0.000$ \\
						OSAN           & $1.000$\scriptsize $\pm 0.000$ \\
						PR-MPNN (ours) & $1.000$\scriptsize $\pm 0.000$ \\
						\bottomrule
					\end{tabular}}
			\end{sc}
		\end{small}
		\label{tab:exp}
	\end{minipage}
	\hfill
	\begin{minipage}{0.48\textwidth}
		\centering
		\caption{Comparison between the base GIN model and probabilistic rewiring model on \textsc{CSL} dataset, w/o positional encodings.}
		\begin{small}
			\begin{sc}
				\resizebox{.8\columnwidth}{!}{
					\begin{tabular}{l r}
						\toprule
						\textbf{Model}          & Accuracy $\uparrow$            \\
						\midrule
						GIN                     & $0.100$\scriptsize $\pm 0.000$ \\
						GIN + PosEnc            & $1.000$\scriptsize $\pm 0.000$ \\
						PR-MPNN (ours)          & $0.998$\scriptsize $\pm 0.008$ \\
						PR-MPNN + PosEnc (ours) & $1.000$\scriptsize $\pm 0.000$ \\
						\bottomrule
					\end{tabular}}
				\label{tab:csl}
			\end{sc}
		\end{small}
	\end{minipage}
\end{table}

\begin{table}[!t]
	\centering
	\caption{Comparison between PR-MPNN and other approaches as reported in \citet{giusti2023cin, Karhadkar2022-uz, Pap+2021}. Our model outperforms existing approaches while keeping a lower variance in most of the cases, except for \textsc{NCI1}, where the WL Kernel is the best. We use \textbf{\textcolor{first}{green}} for the best model, \textbf{\textcolor{second}{blue}} for the second-best, and \textbf{\textcolor{third}{red}} for third.}
	\label{tab:tud}
	\resizebox{\linewidth}{!}{%
		\begin{sc}
			\begin{tabular}{l  lllll}
				\toprule
				\textbf{Model}                                       &
				\textsc{Mutag}                                       &
				\textsc{PTC\_MR}                                     &
				\textsc{Proteins}                                    &
				\textsc{NCI1}                                        &
				\textsc{NCI109}                                        \\
				\midrule

				GK ($k=3$) \citep{She+2009}                          &
				81.4\scriptsize$\pm1.7$                              & %
				55.7\scriptsize$\pm0.5$                              & %
				71.4\scriptsize$\pm0.3$                              & %
				62.5\scriptsize$\pm0.3$                              & %
				62.4\scriptsize$\pm0.3$                                \\

				PK \citep{Neu+2016}                                  &
				76.0\scriptsize$\pm2.7$                              & %
				59.5\scriptsize$\pm2.4$                              & %
				73.7\scriptsize$\pm0.7$                              & %
				82.5\scriptsize$\pm0.5$                              & %
				N/A                                                    \\

				WL kernel \citep{She+2011}                           &
				90.4\scriptsize$\pm5.7$                              & %
				59.9\scriptsize$\pm4.3$                              & %
				75.0\scriptsize$\pm3.1$                              & %
				\textcolor{first}{\textbf{86.0\scriptsize$\pm1.8$}}  & %
				N/A                                                    \\

				\midrule

				DGCNN \citep{Zha+2018}                               &
				85.8\scriptsize$\pm1.8$                              & %
				58.6\scriptsize$\pm2.5$                              & %
				75.5\scriptsize$\pm0.9$                              & %
				74.4\scriptsize$\pm0.5$                              & %
				N/A                                                    \\ %

				IGN \citep{Mar+2019c}                                &
				83.9\scriptsize$\pm13.0$                             & %
				58.5\scriptsize$\pm6.9$                              & %
				76.6\scriptsize$\pm5.5$                              & %
				74.3\scriptsize$\pm2.7$                              & %
				72.8\scriptsize$\pm1.5$                                \\

				GIN \citep{xu2018how}                                &
				89.4\scriptsize$\pm5.6$                              & %
				64.6\scriptsize$\pm7.0$                              & %
				76.2\scriptsize$\pm2.8$                              & %
				82.7\scriptsize$\pm1.7$                              & %
				N/A                                                    \\

				PPGNs \citep{Mar+2019}                               &
				90.6\scriptsize$\pm8.7$                              & %
				66.2\scriptsize$\pm6.6$                              & %
				77.2\scriptsize$\pm4.7$                              & %
				83.2\scriptsize$\pm1.1$                              & %
				82.2\scriptsize$\pm1.4$                                \\

				Natural GN \citep{de2020natural}                     &
				89.4\scriptsize$\pm1.6$                              & %
				66.8\scriptsize$\pm1.7$                              & %
				71.7\scriptsize$\pm1.0$                              & %
				82.4\scriptsize$\pm1.3$                              & %
				83.0\scriptsize$\pm1.9$                                \\

				GSN \citep{botsas2020improving}                      &
				92.2\scriptsize$\pm7.5$                              & %
				68.2\scriptsize$\pm7.2$                              & %
				76.6\scriptsize$\pm5.0$                              & %
				83.5\scriptsize$\pm2.0$                              & %
				83.5\scriptsize$\pm2.3$                                \\

				\midrule

				CIN \citep{Bod+2021}                                 &
				92.7\scriptsize$\pm6.1$                              & %
				68.2\scriptsize$\pm5.6$                              & %
				77.0\scriptsize$\pm4.3$                              & %
				83.6\scriptsize$\pm1.4$                              & %
				\textcolor{third}{\textbf{84.0\scriptsize$\pm1.6$}}    \\

				CAN \citep{giusti2022cell}                           &
				\textcolor{third}{\textbf{94.1\scriptsize$\pm4.8$}}  & %
				\textcolor{third}{\textbf{72.8\scriptsize$\pm8.3$}}  & %
				\textcolor{third}{\textbf{78.2\scriptsize$\pm2.0$}}  & %
				84.5\scriptsize$\pm1.6$                              & %
				83.6\scriptsize$\pm1.2$                                \\

				CIN++ \citep{giusti2023cin}                          &
				\textcolor{second}{\textbf{94.4\scriptsize$\pm3.7$}} & %
				\textcolor{second}{\textbf{73.2\scriptsize$\pm6.4$}} & %
				\textcolor{second}{\textbf{80.5\scriptsize$\pm3.9$}} & %
				\textcolor{third}{\textbf{85.3\scriptsize$\pm1.2$}}  & %
				\textcolor{second}{\textbf{84.5\scriptsize$\pm2.4$}}   \\

				\midrule

				FoSR \citep{Karhadkar2022-uz}                        &
				86.2\scriptsize$\pm1.5$                              & %
				58.5\scriptsize$\pm1.7$                              & %
				75.1\scriptsize$\pm0.8$                              & %
				72.9\scriptsize$\pm0.6$                              & %
				71.1\scriptsize$\pm0.6$                                \\

				DropGNN \citep{Pap+2021}                             &
				90.4\scriptsize$\pm7.0$                              & %
				66.3\scriptsize$\pm8.6$                              & %
				76.3\scriptsize$\pm6.1$                              & %
				81.6\scriptsize$\pm1.8$                              & %
				80.8\scriptsize$\pm2.6$                                \\

				\midrule

				PR-MPNN (10-fold CV)                                 &
				\textcolor{first}{\textbf{98.4\scriptsize$\pm2.4$}}  & %
				\textcolor{first}{\textbf{74.3\scriptsize$\pm3.9$}}  & %
				\textcolor{first}{\textbf{80.7\scriptsize$\pm3.9$}}  & %
				\textcolor{second}{\textbf{85.6\scriptsize$\pm0.8$}} & %
				\textcolor{first}{\textbf{84.6\scriptsize$\pm1.2$}}    \\

				\bottomrule
			\end{tabular}
		\end{sc}
	}
	\vspace{-10pt}
\end{table}

\section{Conclusion}
Here, we utilized recent advances in differentiable $k$-subset sampling to devise probabilistically rewired message-passing neural networks, which learn to add relevant edges while omitting less beneficial ones, resulting in the PR-MPNN framework. For the first time, our theoretical analysis explored how PR-MPNNs enhance expressive power, and we identified precise conditions under which they outperform purely randomized approaches. On synthetic datasets, we demonstrated that our approach effectively alleviates the issues of over-squashing and under-reaching while overcoming MPNNs' limits in expressive power. In addition, on established real-world datasets, we showed that our method is competitive or superior to conventional MPNN models and graph transformer architectures regarding predictive performance and computational efficiency. Ultimately, PR-MPNNs represent a significant step towards systematically developing more adaptable MPNNs, rendering them less susceptible to potential noise and missing data, thereby enhancing their applicability and robustness.

\subsubsection*{Acknowledgments}
CQ and CM are partially funded by a DFG Emmy Noether grant (468502433) and RWTH Junior Principal Investigator Fellowship under Germany's Excellence Strategy.
AM and MN acknowledge funding by Deutsche Forschungsgemeinschaft (DFG, German Research Foundation) under Germany's Excellence Strategy - EXC 2075 – 390740016, the support by the Stuttgart Center for Simulation Science (SimTech), and the International Max Planck Research School for Intelligent Systems (IMPRS-IS). This work was funded in part by the DARPA Perceptually-enabled Task Guidance (PTG) Program under contract number HR00112220005, the DARPA Assured Neuro Symbolic Learning and Reasoning (ANSR) Program, and a gift from RelationalAI. GVdB discloses a financial interest in RelationalAI.

\bibliography{references}
\bibliographystyle{iclr2024_conference}

\appendix

\section{Additional related work}

In the following, we discuss additional related work.

\xhdr{Graph structure learning}\label{gsl} The field of graph structure learning (GSL) is a topic related to graph rewiring. Motivated by robustness and more general purposes, several GSL works have been proposed. \citet{jin2020graph} optimizes a graph structure from scratch with some loss function as bias. More generally, an edge scorer function is learned, and modifications are made to the original graph structure~\citep{chen2020iterative,yu2021graph,zhao2021data}. To introduce discreteness and sparsity, \citet{kazi2022differentiable,franceschi2019learning,zhao2021data} leverage Gumbel and Bernoulli discrete sampling, respectively. \citet{saha2023learning} incorporates end-to-end differentiable discrete sampling through the smoothed-Heaviside function. Moreover, GSL also benefits from self-supervised or unsupervised learning approaches; see, e.g., \citet{zou2023se,fatemi2021slaps, liu2022compact,liu2022towards}. For a comprehensive survey of GSL. see~\citet{fatemi2023ugsl,zhou2023opengsl}. In the context of node classification, there has been recent progress in understanding the interplay between graph structure and features~\citep{castellana2023investigating}.

The main differences to the proposed PR-MPNN framework are as follows: (a) for sparsification, existing GSL approaches typically use a $k$-NN algorithm, a simple randomized version of $k$-NN, or model edges with independent Bernoulli random variables. In contrast, PR-MPNN uses a proper probability mass function derived from exactly-$k$ constraints. Hence, we introduce complex dependencies between the edge random variables and trade-off exploration and exploitation during training, and (b) GSL approaches do not use exact sampling of the exactly-$k$ distribution and recent sophisticated gradient estimation techniques. However, the theoretical insights we provide in this paper also largely translate to GSL approaches with the difference that sampling is replaced with an $\argmax$ operation and, therefore, should be of independent interest to the GSL community.

\section{Extended notation}\label{notation_app}
A \new{graph} $G$ is a pair $(V(G),E(G))$ with \emph{finite} sets of
\new{vertices} or \new{nodes} $V(G)$ and \new{edges} $E(G) \subseteq \{ \{u,v\} \subseteq V(G) \mid u \neq v \}$. If not otherwise stated, we set $n \coloneqq |V(G)|$, and the graph is of \new{order} $n$. We also call the graph $G$ an $n$-order graph. For ease of notation, we denote the edge $\{u,v\}$ in $E(G)$ by $(u,v)$ or $(v,u)$. A \new{(vertex-)labeled graph} $G$ is a triple $(V(G),E(G),\ell)$ with a (vertex-)label function $\ell \colon V(G) \to \Nb$. Then $\ell(v)$ is a \new{label} of $v$, for $v$ in $V(G)$. An \new{attributed graph} $G$  is a triple $(V(G),E(G),a)$ with a graph $(V(G),E(G))$ and (vertex-)attribute function $a \colon V(G) \to \Rb^{1 \times d}$, for some $d > 0$. That is, contrary to labeled graphs, we allow for vertex annotations from an uncountable set. Then $a(v)$ is an \new{attribute} or \new{feature} of $v$, for $v$ in $V(G)$. Equivalently, we define an $n$-order attributed graph $G \coloneqq (V(G),E(G),a)$ as a pair $\mG=(G,\mLG)$, where $G = (V(G),E(G))$ and $\mLG$ in $\Rb^{n\times d}$ is a \new{node attribute matrix}. Here, we identify $V(G)$ with $[n]$. For a matrix $\mLG$ in $\Rb^{n\times d}$ and $v$ in $[n]$, we denote by $\mLG_{v \cdot}$ in $\Rb^{1\times d}$ the $v$th row of $\mLG$ such that $\mL_{v \cdot} \coloneqq a(v)$. Furthermore, we can encode an $n$-order graph $G$ via an \new{adjacency matrix} $\vec{A}(G) \in \{ 0,1 \}^{n \times n}$, where $A_{ij} = 1$ if, and only, if $(i,j) \in E(G)$. We also write $\Rb^d$ for $\Rb^{1\times d}$.

The \new{neighborhood} of $v$ in $V(G)$ is denoted by $N(v) \coloneqq  \{ u \in V(G) \mid (v, u) \in E(G) \}$ and the \new{degree} of a vertex $v$ is  $|N(v)|$. Two graphs $G$ and $H$ are \new{isomorphic} and we write $G \simeq H$ if there exists a bijection $\varphi \colon V(G) \to V(H)$ preserving the adjacency relation, i.e., $(u,v)$ is in $E(G)$ if and only if $(\varphi(u),\varphi(v))$ is in $E(H)$. Then $\varphi$ is an \new{isomorphism} between
$G$ and $H$. In the case of labeled graphs, we additionally require that
$l(v) = l(\varphi(v))$ for $v$ in $V(G)$, and similarly for attributed graphs. Further, we call the equivalence classes induced by $\simeq$ isomorphism types.

A \new{node coloring} is a function $c \colon V(G) \to \Rb^d$, $d > 0$,  and we say that  $c(v)$ is the \new{color} of $v \in V(G)$. A node coloring induces an \new{edge coloring} $e_c \colon E(G) \to \Nb$, where $(u,v) \mapsto \{ c(u),c(v) \}$ for $(u,v) \in E(G)$. A node coloring (edge coloring) $c$ \new{refines} a node coloring (edge coloring) $d$, written $c \sqsubseteq d$ if $c(v) = c(w)$ implies $d(v) = d(w)$ for every $v, w \in V(G)$ ($v, w \in E(G)$). Two colorings are equivalent if $c \sqsubseteq d$ and $d \sqsubseteq c$, in which case we write $c \equiv d$. A \new{color class} $Q \subseteq V(G)$ of a node coloring $c$ is a maximal set of nodes with $c(v) = c(w)$ for every $v, w \in Q$. A node coloring is called \new{discrete} if all color classes have cardinality $1$.

\section{Missing proofs}

In the following, we outline missing proofs from the main paper.

\begin{theorem}[\cref{sep-isomorphic-discrete_main} in the main paper]
	\label{sep-isomorphic-discrete_app}
	For sufficiently large $n$, for every $\varepsilon \in (0, 1)$ and $k > 0$, we have that for almost all pairs, in the sense of~\cite{Bab+1980}, of isomorphic $n$-order graphs $G$ and $H$ and all permutation-invariant, \wlone-equivalent functions  $f \colon \mathfrak{A}_n \to \Rb^d$, $d > 0$, there exists a conditional probability mass function $p_{(\btheta,k)}$ that separates the graph $G$ and $H$ with probability at most $\varepsilon$ regarding $f$.
\end{theorem}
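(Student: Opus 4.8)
The plan is to exploit the well-known fact (due to Babai, Erdős, and Selkow, cited here as~\cite{Bab+1980}) that almost all $n$-order graphs are identified by $\wlone$ after a constant number of rounds, and in fact have a \emph{discrete} stable coloring. So fix $\varepsilon \in (0,1)$ and $k > 0$, restrict attention to the (asymptotically full-measure) family of graphs $G$ whose $\wlone$ stable coloring $C^1_\infty$ is discrete, and let $H$ be any graph isomorphic to $G$, say via $\varphi \colon V(G) \to V(H)$. First I would observe that because $C^1_\infty$ is discrete and $\wlone$ is isomorphism-invariant, $\varphi$ is the \emph{unique} isomorphism between $G$ and $H$, and it is exactly the color-preserving bijection: $C^1_\infty(v) = C^1_\infty(\varphi(v))$ for all $v$, and each color class is a singleton on both sides.

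Next I would construct the priors $\btheta = h_{\bv}(\vec A(G), \vec X)$ so that they are themselves $\wlone$-equivariant — concretely, define $\theta_{ij}$ as a function only of the pair of stable colors $\{C^1_\infty(i), C^1_\infty(j)\}$ (and whether $(i,j)$ is an edge). Since the upstream model is an MPNN this is expressible, and the key point is that such priors transform \emph{consistently} under $\varphi$: the prior weight on $(i,j)$ in $G$ equals the prior weight on $(\varphi(i),\varphi(j))$ in $H$. Because the colorings are discrete, every edge of $G$ receives a \emph{distinct} unordered color pair, and hence the entire prior matrix $\btheta_G$ is carried by $\varphi$ to $\btheta_H$ as a permuted copy. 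Then the induced exactly-$k$ distributions satisfy $p_{(\btheta_G,k)}(\vec A(\bar G)) = p_{(\btheta_H,k)}(\vec A(\varphi(\bar G)))$ for every sampled $\bar G$; i.e., sampling from $G$ and then applying $\varphi$ is distributionally identical to sampling from $H$. Aggregating with the original adjacency matrix via any permutation-equivariant $g$ preserves this, so $\bar{\vec A}_G$ and $\bar{\vec A}_H$ are equal in distribution up to the relabeling $\varphi$.

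Finally, since $f$ is permutation-invariant, $f(\bar{\vec A}_G) = f(\varphi \cdot \bar{\vec A}_G) = f(\bar{\vec A}_H)$ whenever the two samples correspond under $\varphi$; coupling the two sampling processes through $\varphi$ gives $\mathbb{E}[\,f(\vec A(\bar G)) \neq f(\vec A(\bar H))\,] = 0 \le \varepsilon$. Here the ``at most $\varepsilon$'' slack is not even needed for graphs with fully discrete coloring — it is there to absorb the vanishing fraction of graphs in the Babai–Erdős–Selkow bound that are \emph{not} so identified, for which one falls back to the trivial bound and notes the fraction is below $\varepsilon$ for $n$ large. The main obstacle, and the step I would write most carefully, is verifying that the upstream MPNN can in fact realize $\wlone$-equivariant priors and that discreteness of the coloring forces the prior matrix to be a genuine $\varphi$-pushforward (so that no two non-corresponding edges get tangled); the coupling argument and the invariance of $f$ are then routine. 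The generalization to node removal alluded to after the theorem statement follows the same template, sampling $k$-subsets of vertices instead of edges.
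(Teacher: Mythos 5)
Your proposal correctly identifies the two ingredients the paper also uses — the Babai–Erdős–Selkow fact that almost all graphs have a discrete \wlone{} stable coloring, and the realizability of essentially arbitrary (color-determined) edge priors by the upstream MPNN when the coloring is discrete. However, your final step contains a genuine gap. The separation probability is defined as $\mathbb{E}_{\bar{G} \sim p(\cdot \mid G),\, \bar{H} \sim p(\cdot \mid H)}[f(\vec{A}(\bar{G})) \neq f(\vec{A}(\bar{H}))]$ with $\bar G$ and $\bar H$ drawn \emph{independently} from the two conditional distributions; you are not free to choose a coupling. Making the priors \wlone-equivariant only guarantees that the two distributions are pushforwards of one another under $\varphi$, i.e., that $f(\vec{A}(\bar G))$ and $f(\vec{A}(\bar H))$ are \emph{equal in distribution} — it does not make two independent draws agree. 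For instance, with $H = G$ and any non-degenerate exactly-$k$ distribution, the independent samples may remove different $k$-subsets of edges, yielding non-isomorphic graphs $G - S_1$ and $G - S_2$ that a \wlone-equivalent $f$ can separate. So your claimed separation probability of $0$ does not follow.

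The paper closes exactly this gap by \emph{concentrating} the distribution rather than merely symmetrizing it: it exhibits priors $\btheta$ (two weights $w_1 > w_2$, with an explicit bound on the partition function $Z$) under which one fixed $k$-subset $S$ of edges of $G$ is sampled with probability at least $\sqrt{1-\varepsilon}$, and likewise $\varphi(S)$ in $H$. By independence, both events occur with probability at least $1-\varepsilon$, in which case the two rewired graphs are isomorphic and hence indistinguishable by any \wlone-equivalent $f$; the separation probability is therefore at most $\varepsilon$. This also explains why the $\varepsilon$ slack is needed even for graphs with fully discrete colorings (finite priors can never make the exactly-$k$ distribution an exact point mass), whereas you attribute the slack to the exceptional graphs outside the Babai et al.\ class — those are in fact simply excluded by the ``almost all pairs'' quantifier and play no role in the $\varepsilon$. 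To repair your argument, replace the equivariance-plus-coupling step with a concentration step of this kind (which your color-determined priors can realize, since discreteness gives every edge a distinct color pair).
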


Before proving the above result, we first need three auxiliary results. The first one is the well-known universal approximation theorem for multi-layer perceptrons.

\begin{theorem}[\cite{Cyb+1992,Les+1993}]\label{the:fnn-app}
	Let $\sigma \colon \Rb \to \Rb$ be continuous and not polynomial. Then
	for every continuous function $f \colon K\to \Rb^n$, where
	$K\subseteq\Rb^m$ is a compact set, and every $\varepsilon>0$ there is a depth-two
	multi-layer perceptron $N$ with activation function $\sigma^{(1)}=\sigma$ on layer $1$ and
	no activation function on layer 2 (i.e., $\sigma^{(2)}$ is the
	identity function) computing a function $f_{N}$ such that
	\[
		\sup_{\vec x\in K}\|f(\vec x)-f_{N}(\vec x)\|<\varepsilon.
	\]
\end{theorem}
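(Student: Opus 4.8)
The plan is to follow the route of \cite{Les+1993}, reducing the claim to the density of single-hidden-layer networks and then to the Stone--Weierstrass theorem. First I would reduce to the scalar case $n=1$: a network with output in $\Rb^n$ is obtained by placing $n$ scalar networks side by side and taking the union of their hidden units, so if each coordinate $f_i$ of $f$ is approximated to within $\varepsilon/\sqrt{n}$ by such a network, the concatenation approximates $f$ to within $\varepsilon$. It therefore suffices to prove that the linear span
\[
  \Sigma(\sigma) \coloneqq \mathrm{span}\bigl\{\, \vec x \mapsto \sigma(\vec w^\top \vec x + b) \ \bigm|\ \vec w \in \Rb^m,\ b \in \Rb \,\bigr\}
\]
is dense in $C(K)$ with the supremum norm, since every element of $\Sigma(\sigma)$ is exactly the function computed by a depth-two perceptron with activation $\sigma$ on the hidden layer and the identity on the output layer. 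By the Tietze extension theorem I may assume $K$ is a closed cube.

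Next I would reduce to the case where $\sigma$ is smooth. Fix a mollifier $\varphi \in C_c^\infty(\Rb)$ with $\int \varphi = 1$ and set $\sigma_\varphi \coloneqq \sigma * \varphi \in C^\infty(\Rb)$. Writing $\sigma_\varphi(\vec w^\top \vec x + b) = \int \sigma(\vec w^\top\vec x + b - t)\,\varphi(t)\,dt$ and approximating the integral by Riemann sums shows that $\sigma_\varphi(\vec w^\top\vec x + b)$ is a uniform-on-$K$ limit of elements of $\Sigma(\sigma)$, since each Riemann-sum term is a scalar multiple of $\sigma(\vec w^\top\vec x + b')$; hence $\Sigma(\sigma_\varphi) \subseteq \overline{\Sigma(\sigma)}$. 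A standard argument shows that $\sigma$ non-polynomial implies $\varphi$ can be chosen so that $\sigma_\varphi$ is non-polynomial as well (if $\sigma*\varphi$ were a polynomial for every mollifier $\varphi$, then $\sigma$ itself would be a polynomial). So from now on I assume $\sigma \in C^\infty$ and non-polynomial.

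The heart of the argument is to show that every polynomial in $\vec x$ lies in $\overline{\Sigma(\sigma)}$. Since $\sigma$ is smooth and not a polynomial, there is a point $b_0 \in \Rb$ with $\sigma^{(k)}(b_0) \neq 0$ for all $k \geq 0$: otherwise the closed sets $\{b : \sigma^{(k)}(b) = 0\}$ would cover $\Rb$, and by the Baire category theorem one of them would contain an interval, forcing $\sigma$ to agree there with a polynomial and, after a short additional argument, everywhere. For a fixed direction $\vec a \in \Rb^m$, the map $\lambda \mapsto \sigma(\lambda\, \vec a^\top \vec x + b_0)$ belongs to $\Sigma(\sigma)$ for each $\lambda$, and its $k$-th derivative at $\lambda=0$ equals $(\vec a^\top \vec x)^k\,\sigma^{(k)}(b_0)$; this derivative is a uniform-on-$K$ limit of finite difference quotients, which are linear combinations of elements of $\Sigma(\sigma)$, so $(\vec a^\top \vec x)^k \in \overline{\Sigma(\sigma)}$ for every $\vec a$ and $k$. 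Since every monomial of degree $k$ is a linear combination of finitely many $k$-th powers $(\vec a^\top \vec x)^k$ of linear forms (a polarization identity), all polynomials lie in the closed subspace $\overline{\Sigma(\sigma)}$. By Stone--Weierstrass, polynomials are dense in $C(K)$, hence $\overline{\Sigma(\sigma)} = C(K)$. Unwinding the reductions yields, for the given $f$ and $\varepsilon$, a finite sum $\sum_{i=1}^N c_i\,\sigma(\vec w_i^\top \vec x + b_i)$ within $\varepsilon$ of $f$ on $K$, which is the desired network $N$.

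I expect the main obstacle to be the smooth reduction together with the existence of the ``good point'' $b_0$: checking that mollification can be made to preserve non-polynomiality, and that a smooth non-polynomial function has a point where all derivatives are nonzero, both rest on the Baire-category and real-analysis arguments above rather than a formal manipulation. The ridge-to-monomial step and the Stone--Weierstrass closing are comparatively routine. (An alternative, following \cite{Cyb+1992}, would be a Hahn--Banach/Riesz duality argument showing that any signed measure annihilating $\Sigma(\sigma)$ must vanish; the Leshno-style density argument seems cleaner under the stated ``non-polynomial'' hypothesis.)
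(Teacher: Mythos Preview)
The paper does not prove this theorem. It is quoted as a classical result with citations to \cite{Cyb+1992,Les+1993} and then used as a black box in the proofs of Lemmas~\ref{node-wl-universal} and~\ref{edge-wl-universal}. There is therefore no ``paper's own proof'' to compare your proposal against.

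Your sketch follows the Leshno--Lin--Pinkus--Schocken route and is essentially sound, but one step is stated in a stronger form than needed and the justification you give for it does not go through. You claim there is a single point $b_0$ with $\sigma^{(k)}(b_0)\neq 0$ for \emph{all} $k$, arguing via Baire that otherwise some $\{b:\sigma^{(k)}(b)=0\}$ contains an interval, so $\sigma$ is a polynomial on that interval and hence ``after a short additional argument, everywhere.'' That last implication is false for smooth functions: a $C^\infty$ function can coincide with a polynomial on an interval without being a global polynomial (take any compactly supported smooth bump). Fortunately you do not need a common $b_0$. Since $\sigma$ is not a polynomial, $\sigma^{(k)}$ is not identically zero for any $k$ (otherwise $\sigma$ would have degree $<k$), so for each $k$ there exists $b_k$ with $\sigma^{(k)}(b_k)\neq 0$. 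Differentiating $\lambda\mapsto\sigma(\lambda\,\vec a^\top\vec x+b_k)$ at $\lambda=0$ then gives $(\vec a^\top\vec x)^k\,\sigma^{(k)}(b_k)\in\overline{\Sigma(\sigma)}$ exactly as you wrote, with $b_k$ allowed to depend on $k$. With this correction the remainder of your outline (mollification, polarization, Stone--Weierstrass) is fine.
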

Building on the first, the second result shows that an MPNN can approximate real-valued node colorings of a given finite graph arbitrarily close.
\begin{lemma}
	\label{node-wl-universal}
	Let $G$ be an $n$-order graph and let $c \colon V(G) \to \bbR^d$, $d > 0$, be a \wlone-equivalent node coloring. Then, for all $\varepsilon > 0$, there exists a (permutation-equivariant) MPNN $f \colon V(G) \to \Rb^d$, such that
	\begin{align*}
		\max_{v \in V(G)} \norm{  f(v) -  c(v) } < \varepsilon.
	\end{align*}
\end{lemma}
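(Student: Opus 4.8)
\textbf{Proof plan for \cref{node-wl-universal}.}
The plan is to reduce the approximation of an arbitrary $\wlone$-equivalent node coloring $c$ to the approximation of the discrete/stabilized color sequence produced by $\wlone$ itself, and then to realize that color-refinement computation by a finite-depth MPNN whose internal update and aggregation maps are each approximated by MLPs via \cref{the:fnn-app}. First I would recall that since $G$ is a fixed finite graph, $\wlone$ stabilizes after at most $n$ iterations, producing the stable coloring $C^1_\infty$; by definition of $\wlone$-equivalence, $c \equiv C^1_\infty$, so there is a function $\xi$ (defined on the finite set of stable colors) with $c(v) = \xi(C^1_\infty(v))$ for all $v$. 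Hence it suffices to build an MPNN that, on this particular $G$, computes $C^1_\infty$ up to relabeling, and then apply $\xi$ in the readout/final update (extending $\xi$ to a continuous map on a compact superset of the finitely many color vectors, which is possible by, e.g., Tietze extension or just piecewise-linear interpolation).

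Next I would describe the single-iteration construction. Encode the initial labels $\ell(v)$ as distinct real vectors $\hb_v^{(0)}$. For iteration $t$, the $\wlone$ update $C^1_t(v) = \REL(C^1_{t-1}(v), \{\!\!\{ C^1_{t-1}(u) \mid u \in N(v)\}\!\!\})$ must be emulated. Over the finite graph $G$ there are only finitely many vertices, hence finitely many pairs $(\text{own color}, \text{multiset of neighbor colors})$ actually occurring; choosing the aggregation $\AGG$ to be summation and encoding colors as suitably separated vectors (e.g., powers of a large base, or one-hot-like codes), the map from $(\hb_v^{(t-1)}, \sum_{u\in N(v)} \hb_u^{(t-1)})$ to a fresh code $\hb_v^{(t)}$ that separates exactly the $\wlone$-classes is a function defined on a finite point set in a compact domain; by \cref{the:fnn-app} there is an MLP computing $\UPD^{(t)}$ that approximates this map to within any prescribed tolerance. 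Composing $n$ such layers, and propagating the error bounds through the (finitely many, Lipschitz-on-a-compact-set) update and aggregation maps, I can make the final hidden representations $\hb_v^{(n)}$ lie within any $\delta$ of an exact injective encoding of $C^1_\infty(v)$; a final MLP approximating $\xi$ then yields $f$ with $\max_{v} \norm{f(v) - c(v)} < \varepsilon$.

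The two bookkeeping points to handle carefully are: (i) \emph{error propagation} — since each layer's MLP is only an $\epsilon_t$-approximation, and later layers amplify errors by their Lipschitz constants on the relevant compact set, one picks the tolerances $\epsilon_1 \gg \epsilon_2 \gg \cdots$ backwards from the target $\varepsilon$, using that all maps involved are continuous on compact sets hence uniformly continuous; this is routine but must be stated. (ii) \emph{separation of codes} — one must guarantee that distinct $\wlone$-classes get encodings separated by a margin bounded below independently of the approximation, so that small perturbations never merge classes; this is arranged by fixing the exact target codes first (with a positive minimum pairwise distance, since there are finitely many) and only then choosing approximation tolerances smaller than a fraction of that margin. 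The main obstacle, and the only genuinely substantive step, is precisely this interaction: ensuring the composition of approximate layers still faithfully computes the combinatorial refinement — i.e., that the MLP approximations never cause two vertices in different stable classes to become indistinguishable downstream. Everything else (stabilization in $\le n$ steps, finiteness of occurring configurations, existence of $\xi$, applicability of \cref{the:fnn-app}) is standard. Note also that permutation-equivariance is automatic because the constructed $f$ is a genuine MPNN in the sense of \cref{sec:gnn}, whose layers act identically on every vertex.
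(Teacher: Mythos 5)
Your proposal is correct and follows essentially the same route as the paper: factor the \wlone-equivalent coloring $c$ through an (injectively encoded) \wlone{} stable coloring, observe that the resulting recoloring map lives on a finite, hence compact, set, and approximate it with an MLP via \cref{the:fnn-app}. The only difference is that the paper obtains the \wlone-equivalent MPNN as a black box from Theorem~2 of \citet{Mor+2019}, whereas you reconstruct it explicitly layer by layer with the attendant error-propagation and code-separation bookkeeping --- more self-contained, but establishing nothing beyond what the citation already provides.
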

\begin{proof}[Proof sketch]
	First, by~\cite[Theorem 2]{Mor+2019}, there exists an \wlone-equivalent MPNN $m \colon V(G)  \to \Rb^d$ such that
	\begin{align*}
		c \equiv m.
	\end{align*}
	Since the graph's number of vertices, by assumption, is finite,  the cardinality of the image $K \coloneqq m^{-1}$ is also finite. Hence, we can find a continuous function $g \colon K \to \bbR^d$ such that $(g \circ m)(v) = c(v)$ for $ v \in V(G)$. Since $K$ is finite and hence compact and $g$ is continuous, by~\cref{the:fnn-app}, we can approximate it arbitrarily close with a two-layer multi-layer perceptron, implying the existence of the MPNN $f$.
\end{proof}
The third result lifts the previous result to edge colorings.
\begin{lemma}
	\label{edge-wl-universal}
	Let $G$ be an $n$-order graph and let $c \colon E(G) \to \bbR^d$, $d > 0$, be a \wlone-equivalent edge coloring. Then, for all $\varepsilon > 0$, there exists a (permutation-equivariant) MPNN $f \colon E(G) \to \Rb^d$, such that
	\begin{align*}
		\max_{e \in E(G)} \norm{  f(e) -  c(e) } < \varepsilon.
	\end{align*}
\end{lemma}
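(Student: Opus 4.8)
The plan is to reduce the edge-coloring statement to the node-coloring statement of \cref{node-wl-universal}, which is already available. The key observation is that a \wlone-equivalent edge coloring $c \colon E(G) \to \Rb^d$ is, by definition, determined by the stable node coloring $C^1_\infty$ via the induced edge coloring $e_{C^1_\infty}$, which maps $(u,v)$ to the multiset $\{C^1_\infty(u), C^1_\infty(v)\}$. So the first step is to construct, from the graph $G$, an auxiliary graph $\hat{G}$ whose vertex set contains a vertex $x_e$ for every edge $e \in E(G)$, and to arrange the adjacency of $\hat{G}$ so that running \wlone{} on $\hat{G}$ gives each $x_e$ a color that refines and is refined by $e_c(e)$. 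The natural choice is (a variant of) the incidence graph or line graph: add $x_e$ for each $e = (u,v)$ and connect $x_e$ to $u$ and to $v$ (keeping the original vertices and edges, possibly with a label marking which vertices are ``edge vertices''). Since $\hat{G}$ is a fixed finite graph, the \wlone{} coloring of the edge-vertices $x_e$ is a function of the original stable coloring on $u,v$, hence equivalent to $e_c(e)$, hence to $c(e)$ restricted through a continuous (indeed, finite-domain) map.

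Second, I would apply \cref{node-wl-universal} to $\hat{G}$ with the target node coloring $\hat{c}$ defined by $\hat{c}(x_e) \coloneqq c(e)$ on edge-vertices and $\hat{c}(v) \coloneqq 0$ (or any fixed value) on original vertices. One must check that $\hat{c}$ is \wlone-equivalent on $\hat{G}$: the edge-vertices $x_e, x_{e'}$ get the same stable \wlone{} color in $\hat{G}$ iff $e_c(e) = e_c(e')$ iff $c(e) = c(e')$ (using \wlone-equivalence of $c$), so $\hat{c}$ refines and is refined by the stable coloring on that part of $\hat{G}$; on original vertices $\hat{c}$ is constant, hence trivially coarser than any coloring — but to get genuine equivalence we should instead set $\hat{c}(v)$ to encode $C^1_\infty(v)$, which is fine and still \wlone-equivalent. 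Then \cref{node-wl-universal} yields a permutation-equivariant MPNN $\hat f$ on $\hat G$ with $\max_{x \in V(\hat G)} \|\hat f(x) - \hat c(x)\| < \varepsilon$; restricting attention to the edge-vertices gives $\max_{e \in E(G)} \|\hat f(x_e) - c(e)\| < \varepsilon$.

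Third, I would argue that the computation ``build $\hat G$, run an MPNN on it, read off the edge-vertex embeddings'' is exactly the kind of message passing one is allowed to call an MPNN on $G$ in the sense used by the paper: one round of message passing on $\hat G$ between $x_e$ and its endpoints $u,v$ is, from the point of view of $G$, an edge-update that aggregates the two incident node features — which is standard in MPNN formulations with edge representations (e.g., the GINE-style updates the paper already uses). So I would either (i) phrase the conclusion directly in terms of $\hat G$ and note the equivalence to an edge-featured MPNN on $G$, or (ii) unfold the construction: run a node-level MPNN on $G$ to approximate $C^1_\infty$ (by \cref{node-wl-universal}), then one extra edge-readout layer $e = (u,v) \mapsto \phi(\{\!\!\{ \hb_u, \hb_v \}\!\!\})$ with $\phi$ a two-layer perceptron approximating the finite-domain map $e_{C^1_\infty} \mapsto c$ via \cref{the:fnn-app}. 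Option (ii) is cleaner and avoids defining a new graph; it mirrors the proof of \cref{node-wl-universal} almost verbatim, with the only new ingredient being the permutation-invariant pooling over the (exactly two) endpoints of each edge.

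The main obstacle — really the only non-cosmetic point — is making precise what ``MPNN $f \colon E(G) \to \Rb^d$'' means and ensuring the approximation is uniform over all edges simultaneously. The uniformity is free because $E(G)$ is finite: approximate the node coloring to within $\delta$, and since the subsequent map $\phi$ is (uniformly) continuous on a compact set containing all the finitely many perturbed pairs, choosing $\delta$ small enough propagates to $\varepsilon$ on every edge. Care is needed that $\phi$ must be defined and continuous on a neighborhood of the finite exact image set, not just on the set itself — but this is exactly the same issue handled in the proof sketch of \cref{node-wl-universal}, so the argument transfers directly. I would therefore keep the write-up short: state the reduction, invoke \cref{node-wl-universal} and \cref{the:fnn-app}, and note the compactness/finiteness argument for uniformity.
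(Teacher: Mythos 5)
Your proposal is correct and matches the paper's intent: the paper's own proof is a one-line remark that the argument is ``analogous to the proof of \cref{node-wl-universal}'', and your option (ii) --- approximate the stable node coloring with an MPNN via \cref{node-wl-universal}, then compose with an edge readout $\phi(\oms \hb_u, \hb_v \cms)$ approximating the finite-domain map via \cref{the:fnn-app}, with uniformity from finiteness of $E(G)$ --- is exactly that analogy spelled out. Your write-up in fact supplies more detail than the paper does.
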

\begin{proof}[Proof sketch]
	The proof is analogous to the proof of~\cref{node-wl-universal}.
\end{proof}

We note here that we can extend the above results to any finite subset of $n$-order graphs. We are now ready to prove~\cref{sep-isomorphic-discrete_app}.
\begin{proof}[Proof sketch]
	Following~\cite{Bab+1980}, for a sufficiently large order $n$, the \wlone{} will compute a discrete coloring for almost any $n$-order graph. Concretely, they showed that an algorithm equivalent to the \wlone{} computes a discrete coloring of graphs sampled from the Erdős–Rényi random graph model $G(n, 1/2)$ with the probability of failure bounded by $\mathcal{O}(n^{-1/7})$. Since the $G(n, 1/2)$ model assigns a uniform distribution over all graphs, the \wlone{} succeeds on ``almost all'' graphs.

	By the above, and due to \cref{node-wl-universal,edge-wl-universal}, every node, and thereby any edge, can be assigned a distinct arbitrary prior weight with an upstream MPNN. Consequently, there exists an upstream MPNN that returns a high prior  $\btheta_{ij}$ for exactly $k$ edges ($\btheta_i$ for exactly $k$ nodes) such that sampling from the exactly-$k$ distribution returns these $k$ edges (nodes) with probability at least $\sqrt{1-\varepsilon}$. Specifically, we know that the upstream MPNN can return arbitrary priors $\btheta_{ij}$, and we want to show that, given some $\delta\in(0,1)$, there exists at least one $\btheta$ such that for a set $S$ of $k$ edges (nodes) of $G$, we have $p_{\btheta, k}(S)\geq \delta$.

	Let $m$ be the number of edges in the graph $G$ That is, from our probability definition, we obtain $p_{\btheta}(S)\geq \delta Z$. Without losing generality, let $w_1$ and $w_2$ be two prior weights with $w_1 > w_2$ such that $\theta_i = w_1$ for the edges (nodes) in $S$ and $\theta_i=w_2$ otherwise. Then $p_{\btheta}(S) \geq \delta Z$ becomes $w_1^k\geq \delta (\sum_{i=0}^k\binom{k}{i}\binom{m-k}{k-i}w_1^iw_2^{k-i})$. We use the upper bound $Z \leq w_1^k + (\binom{m}{k}-1)w_2w_1^{k-1}$ and obtain $w_2\leq (1-\delta)w_1\delta^{-1}(\binom{m}{k}-1)^{-1}$. Therefore, a prior $\btheta$ exists, and we can obtain it by using the derived inequality. Now, we can set $\delta=\sqrt{1-\varepsilon}$. The sampled $k$ edges (nodes) are then identical for both graphs with probability at least $\sqrt{1-\varepsilon}^2= 1-\varepsilon$ and, therefore, the edges (nodes) that are removed are isomorphic edges (nodes) with probability at least $1-\varepsilon$. When we remove
	pairs of edges (nodes) from two isomorphic graphs that are mapped to each other via an isomorphism, the graphs remain isomorphic and, therefore, must have the same \wlone{} coloring. Since the two graphs have the same \wlone{} coloring, an MPNN downstream model $f$ cannot separate them.
\end{proof}

\begin{figure}[t!]
	\centering
	\includegraphics[scale=0.6]{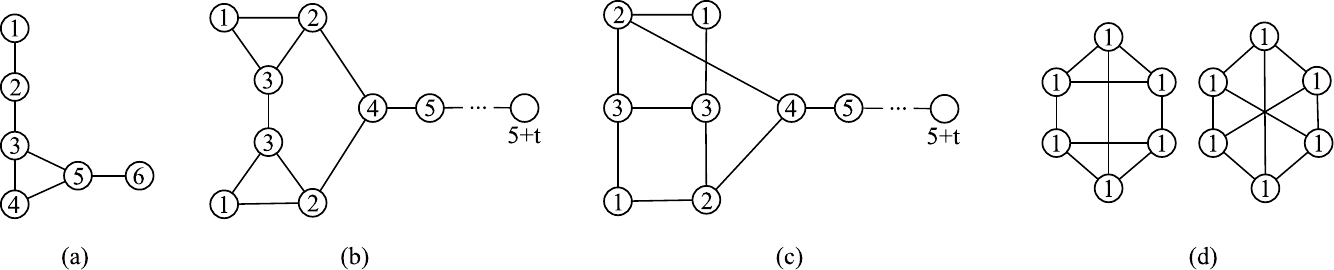}
	\caption{Example graphs used in the theoretical analysis. }
	\label{fig:example-graphs-1}
\end{figure}

\begin{proposition}[\cref{sep-isomorphic-discrete-non-discrete} in the main paper]
	Let $\varepsilon \in (0, 1)$, $k > 0$, and let $G$ and $H$ be graphs with identical \wlone{} stable colorings. Let $V_G$ and $V_H$ be the subset of nodes of $G$ and $H$ that are in color classes of cardinality $1$. Then, for all choices of \wlone-equivalent functions $f$, there exists a conditional probability mass function $p_{(\btheta,k)}$ that separates the graphs $G[V_G]$ and $H[V_H]$ with probability at most $\varepsilon$ regarding $f$.
\end{proposition}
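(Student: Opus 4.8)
The plan is to follow the blueprint of the proof of \cref{sep-isomorphic-discrete_app}, replacing the hypothesis ``$G$ and $H$ are isomorphic with a globally discrete \wlone{} coloring'' by ``the subgraphs induced on the singleton color classes are isomorphic.'' Concretely, I would first show that $G[V_G]$ and $H[V_H]$ are isomorphic, then design a prior $\btheta$ that, conditioned on either of these two subgraphs, deletes a fixed pair of ``corresponding'' $k$-edge sets with probability at least $\sqrt{1-\varepsilon}$, and finally observe that on the event (of probability at least $1-\varepsilon$) that both deletions occur the rewired graphs are isomorphic, hence share their \wlone{} stable colorings, hence are mapped to the same vector by every \wlone-equivalent $f$.

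First I would set up the bijection $\psi \colon V_G \to V_H$ that sends, for each color $c$ forming a singleton class, the unique $c$-colored vertex of $G$ to the unique $c$-colored vertex of $H$; this is well-defined and bijective because $G$ and $H$ have identical \wlone{} stable colorings. The core step is to show $\psi$ is a graph isomorphism $G[V_G] \to H[V_H]$: running \wlone{} on the disjoint union $G \sqcup H$ reproduces, on each component, the stable coloring of that component, so by stability any two equally colored vertices — in particular $v' \in V_G$ and $\psi(v')$ — have the same multiset of neighbor colors; since a singleton color $c$ appears in the neighbor multiset of $v'$ exactly when the $c$-colored vertex is adjacent to $v'$, this forces $(v,v') \in E(G) \iff (\psi(v),\psi(v')) \in E(H)$ for all $v,v' \in V_G$. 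Next, since within $G$ the vertices of $V_G$ carry pairwise distinct colors, so do the edges of $G[V_G]$ under the induced edge coloring; by \cref{node-wl-universal,edge-wl-universal} (applied to $G\sqcup H$) there is an upstream MPNN realizing any prescribed priors on those edges, and since MPNN features are functions of the \wlone{} coloring that $G$ and $H$ share, the priors it produces on $G[V_G]$ and on $H[V_H]$ agree under $\psi$. Then I would reuse the weight estimate from the proof of \cref{sep-isomorphic-discrete_app}: fixing a $k$-subset $S_0 \subseteq E(G[V_G])$ and prior values $w_1 > w_2$ with $w_2 \le (1-\delta)\,w_1\,\delta^{-1}\bigl(\binom{m'}{k}-1\bigr)^{-1}$, where $\delta = \sqrt{1-\varepsilon}$ and $m' = |E(G[V_G])|$, gives $p_{(\btheta,k)}(S_0) \ge \delta$ conditioned on $G[V_G]$ and, by the matching of priors, $p_{(\btheta,k)}(\psi(S_0)) \ge \delta$ conditioned on $H[V_H]$. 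Sampling the two independently, with probability at least $\delta^2 = 1-\varepsilon$ we delete $S_0$ from $G[V_G]$ and $\psi(S_0)$ from $H[V_H]$, and on that event $\psi$ restricts to an isomorphism between the rewired graphs, so every \wlone-equivalent $f$ agrees on them; hence $f$ separates only on the complementary event of probability at most $\varepsilon$.

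I expect the main obstacle to be the isomorphism claim — making rigorous that, under a common stable \wlone{} coloring, adjacency among vertices in singleton classes is determined by their colors. This is a standard consequence of stability, but it is the one place where ``identical stable colorings'' is used essentially, and it is exactly what fails on the non-discrete part. Two minor points I would also treat: if $G[V_G]$ has at most $k$ edges there is no nontrivial $k$-subset to designate, but then any $\psi$-equivariant prior already gives isomorphic rewirings almost surely and the bound is immediate; and the prior need not be produced by an MPNN run on $G[V_G]$ itself (whose \wlone{} coloring need not be discrete) — it suffices that it comes from an isomorphism-equivariant map assigning the designated priors, e.g.\ the restriction to $V_G \times V_G$ of an upstream MPNN run on all of $G$.
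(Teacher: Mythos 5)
Your proposal is correct and follows essentially the same route as the paper's proof: establish that $G[V_G]$ and $H[V_H]$ are isomorphic via the matching of singleton color classes, then reuse the exactly-$k$ prior-weight construction from \cref{sep-isomorphic-discrete_app} so that corresponding $k$-subsets are sampled in both graphs with probability at least $\sqrt{1-\varepsilon}$ each, hence jointly at least $1-\varepsilon$. You supply more detail than the paper's sketch on the isomorphism claim (which the paper merely asserts) and on the edge cases, but the underlying argument is the same.
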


\begin{proof}[Proof sketch]
	Since the graphs $G[V_G]$ and $H[V_H]$ have a discrete coloring under the \wlone{} and the graphs $G$ and $H$ have identical \wlone{} colorings, it follows that there exists an isomorphism $\varphi \colon G[V_G] \to H[V_H]$.

	Analogous to the proof of Theorem~\ref{sep-isomorphic-discrete_main}, we can now show that there exists a set of prior weights that ensures that the exactly-$k$ sample selects the same subset of edges (nodes) from, respectively, the same subset of edges from $G_1[V_1]$ and $G_2[V_2]$ (the same subset of nodes from $V_G$ and $V_H$) with probability at least $\sqrt{1-\epsilon}$. Note that the cardinality of the sampled subsets could also be empty since the priors could be putting a higher weight on nodes (edges) with non-discrete color classes.
\end{proof}

Regarding uniform edge removal, consider the two graphs in~\Cref{fig:example-graphs-1}~(b) and (c). With a distribution based on an MPNN upstream model, the probability of separating the graphs by removing edges in the isomorphic subgraphs induced by the nodes with colors $4$ to $5+t$ can be made arbitrarily small for any $t$. However, the graphs would still be separated through samples in the parts whose coloring is non-discrete. In contrast, sampling uniformly at random separates the graphs in these isomorphic subgraphs with probability converging towards $1$ with increasing $t$.

\begin{theorem}
	\label{thm:separation-compare-random}%
	For every $\varepsilon \in (0, 1)$ and every $k > 0$, there exists a pair of non-isomorphic graphs $G$ and $H$ with identical and non-discrete \wlone{} stable colorings such that for every \wlone-equivalent function $f$
	\begin{enumerate}
		\item[(1)] there exists a probability mass function $p_{(k,\btheta)}$ that separates $G$ and $H$ with probability at least $(1-\varepsilon)$ with respect to $f$;
		\item[(2)] removing edges uniformly at random separates $G$ and $H$ with probability at most $\varepsilon$ with respect to $f$.
	\end{enumerate}
\end{theorem}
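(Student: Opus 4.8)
The plan is to exhibit, for each $\varepsilon$ and $k$, an explicit pair built from a ``difference gadget'' that $1$-WL cannot resolve, padded with many interchangeable components that dilute uniform sampling. I would take the gadget $G_0 \coloneqq C_{3(k+1)}$ (one cycle) and $H_0 \coloneqq (k+1)\cdot C_3$ ($k+1$ disjoint triangles); both are $2$-regular on $3(k+1)$ vertices, so $1$-WL gives every vertex the same color, the two colorings are identical, and $G_0 \not\simeq H_0$. For padding, attach $D$ disjoint copies of $K_2$ to each, setting $G \coloneqq G_0 \sqcup D\cdot K_2$ and $H \coloneqq H_0 \sqcup D\cdot K_2$. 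Since the $1$-WL color of a vertex depends only on its connected component, $G$ and $H$ have identical stable colorings with exactly two classes---``degree-$2$'' cycle vertices and ``degree-$1$'' pendant vertices---which are non-discrete, and $G \not\simeq H$ because $G$ has a component of order at least $6$ while $H$ does not. The padding size $D$ is fixed at the end as a function of $\varepsilon$ and $k$.

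For part (1): the cycle-vertex and pendant-vertex colors differ, hence so do the edge colors of gadget edges (cyc--cyc) and padding edges (pen--pen), so by \cref{edge-wl-universal} there is a $1$-WL-equivalent upstream MPNN producing priors that are large (say $\ge M$) on every gadget edge and strongly negative on every padding edge. As $M\to\infty$ the induced distribution $p_{(\btheta,k)}$ concentrates on $k$-subsets of the $3(k+1)$ gadget edges, so for $M$ large enough it places probability at least $1-\varepsilon$ on the event that both independent samples stay inside the gadgets. Now observe: deleting any $k\ge 1$ edges from the single cycle $C_{3(k+1)}$ leaves a disjoint union of paths (no cyclic component), whereas deleting any $k$ edges from $k+1$ triangles leaves, by pigeonhole, at least one intact triangle. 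Hence on that event $\bar{H}$ has a $C_3$ component---whose stable $1$-WL color is homogeneous and all-degree-$2$, and therefore occurs at no vertex of $\bar{G}$ (a path or a $K_2$ has no such vertex)---so $1$-WL, and thus every $1$-WL-equivalent $f$, separates $\bar{G}$ from $\bar{H}$ with probability at least $1-\varepsilon$.

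For part (2): under uniform $k$-subset sampling of $E(G)$ (and of $E(H)$), the probability that the sampled set lies entirely within the $D$ padding edges is $\binom{D}{k}\big/\binom{3(k+1)+D}{k}$, which tends to $1$ as $D\to\infty$; fix $D=D(\varepsilon,k)$ so that each of the two independent samples avoids the gadget with probability at least $1-\varepsilon/2$. When both samples touch only padding edges, $\bar{G}=C_{3(k+1)}\sqcup F$ and $\bar{H}=(k+1)C_3\sqcup F$ for the same remnant $F$ (every hit $K_2$ turns into two isolated vertices, and the $K_2$'s are interchangeable); since $1$-WL cannot distinguish $C_{3(k+1)}$ from $(k+1)C_3$ and disjoint-union coloring is componentwise, it cannot distinguish $\bar{G}$ from $\bar{H}$, so no $1$-WL-equivalent $f$ separates them. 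A union bound then bounds the separation probability under uniform removal by $\varepsilon$.

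The main obstacle is choosing the padding so that it is at once (i) $1$-WL-distinguishable from the gadget---otherwise a $1$-WL-equivalent upstream model cannot favour the gadget edges, and part (1) fails---and (ii) ``robustly symmetric,'' so that removing any $k$ of its edges yields $1$-WL-equivalent remnants no matter which edges are chosen---otherwise uniform sampling within the padding could itself separate $G$ and $H$, and part (2) fails. This excludes padding by extra cycles (same color as the gadget), by cliques (two adjacent versus two non-adjacent deleted edges give different degree sequences), and by attached paths or tails (position-dependent), leaving disjoint edges as the natural choice; the remaining $1$-WL computations---for $C_{3(k+1)}$ versus $(k+1)C_3$ and for their edge-deletions---are then routine.
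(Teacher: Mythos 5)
Your proof is correct and follows essentially the same strategy as the paper's: a \wlone{}-indistinguishable ``core'' pair whose edges an upstream MPNN can target via the edge-coloring universality lemma (\cref{edge-wl-universal}), padded with \wlone{}-distinguishable inert components that dilute the uniform sampler. Your explicit gadget ($C_{3(k+1)}$ versus $(k+1)\cdot C_3$ with $K_2$ padding) is a clean, self-contained instantiation --- arguably tidier than the paper's figure-based construction, since your pigeonhole argument guarantees separation for \emph{every} $k$-subset of core edges rather than only for subsets concentrated in a designated color class.
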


\begin{figure}[t!]
	\centering
	\includegraphics[scale=0.75]{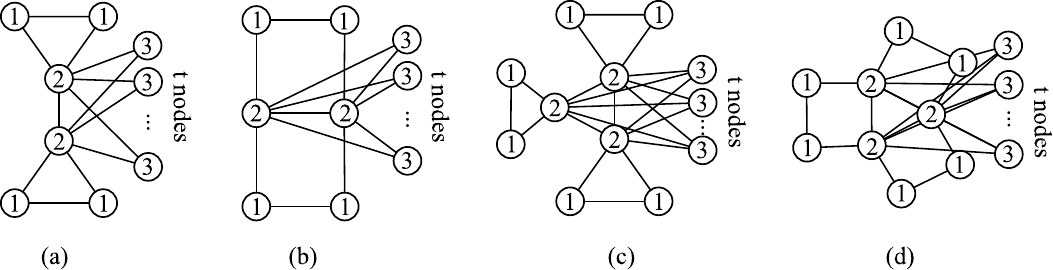}
	\caption{The graphs used in the proof of Theorem~\ref{thm:separation-compare-random} for node sampling.}
	\label{fig:example-graphs-3}
\end{figure}

\begin{figure}[t!]
	\centering
	\includegraphics[scale=0.75]{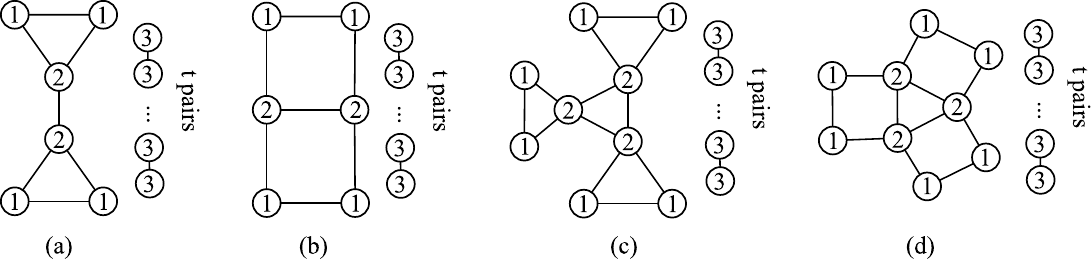}
	\caption{The graphs used in the proof of Theorem~\ref{thm:separation-compare-random} for edge sampling.}
	\label{fig:example-graphs-2}
\end{figure}

\begin{proof}
	We distinguish the two cases: (1) sampling nodes to be removed and (2) sampling edges to be removed from the original graphs.

	For case (1), where we sample nodes to be removed, consider the graphs in~\Cref{fig:example-graphs-3}. For $k=1$, we take the graphs (a) and (b). Both of these graphs have the same \wlone{} coloring, indicated by the color numbers of the nodes. To separate the graphs, we need to sample and remove one of the nodes with color $1$ or $2$. Removing a node with color $3$ would lead again to an identical color partition for the two graphs. Removing nodes with color $1$ or $2$ is achievable by placing a high prior weight $\btheta_{u}$ on nodes of \emph{one} of the corresponding color classes; see~\Cref{node-wl-universal}. Without loss of generality, we choose all nodes $u$ in color class $2$ and set the prior weight $\btheta_u$ such that a node in this color class is sampled with probability $\sqrt{1-\varepsilon}$. Since a random sampler would uniformly sample any of the nodes, we simply have to increase the number $t$ of nodes with color $3$ such that the probability of randomly sampling a node of the color classes $1$ or $2$ is smaller than or equal to $\sqrt{\epsilon}$.

	For $k>1$, we construct the graphs depicted in~\Cref{fig:example-graphs-3} (c) and (d) for $k=2$. These are constructed by first taking a $(k+1)$-cycle and connecting to each node of the cycle the nodes with color class $1$ in the two ways shown in ~\Cref{fig:example-graphs-3} (c) and (d). Finally, we connect $t$ nodes of color class $3$ to each of the nodes in the cycle. These graphs can be separated by sampling $k$ nodes from either the color class $1$ or $2$. For instance, removing $k$ nodes from color class $2$ always creates $k$ disconnected subgraphs of size $2$ in the first parameterized graph but not the second. By~\Cref{node-wl-universal}, we know that we can find an upstream model that leads to prior weights $\btheta_{u}$ such that sampling $k$ nodes from a color class has probability at least $\sqrt{1 - \epsilon}$; see the proof of~\cref{sep-isomorphic-discrete_app}. As argued before, by increasing the number of nodes with color class $3$, we can make the probability that a uniform sampler picks a node with color classes $1$ or $2$ to be less than or equal to $\sqrt{\epsilon}$.

	For case (2), where we sample edges to be removed from the original graph, consider the graphs in~\Cref{fig:example-graphs-2}. For $k=1$, we take the graphs (a) and (b). Both of these graphs have the same \wlone{} coloring, indicated by the color numbers of the nodes. To separate the graphs, we need to sample from each graph an edge $(u, v)$ such that either $C^1_{\infty}(u)=C^1_{\infty}(v)=1$ or $C^1_{\infty}(u)=1$ and $C^1_{\infty}(v)=2$. Removing an edge between two nodes with color class $3$ in both graphs would lead again to an identical color partition of the two graphs. Removing an edge between the color classes $(1, 1)$ and $(1, 2)$ is possible by~\Cref{edge-wl-universal} and choosing a prior weight large enough such that the probability of sampling an edge between these color classes is at least $\sqrt{1-\epsilon}$; see the proof of~\cref{sep-isomorphic-discrete_app}. Since a random sampler would sample an edge uniformly at random, we simply have to increase the number of nodes with color class 3 such that the probability of sampling an edge between the color classes $(1, 2)$ or $(1, 2)$ is smaller than $\sqrt{\epsilon}$.

	For $k>1$, we construct the graphs depicted in~\Cref{fig:example-graphs-2}(c) and (d) for $k=2$. We first take a $(k+1)$-cycle and connect to each node of the said cycle the nodes with color classes $1$ in the two different ways shown in~\Cref{fig:example-graphs-2}(c) and (d). Finally, we again add pairs of connected nodes with color class 3. The two graphs can be separated by sampling $k$ edges between the color classes $(1, 1), (1, 2)$, and $(2, 2)$. For instance,  sampling $k$ edges between nodes in color class $2$ leads to a disconnected subgraph of size $3$ in the first graph but not the second. By~\Cref{edge-wl-universal}, we know that we can learn an upstream MPNN that results in prior edge weights $\btheta_{uv}$ for all edges $(u, v)$ where both $u$ and $v$ are in color class $2$, such that sampling $k$ of these edges has probability at least $\sqrt{1 - \epsilon}$; see the proof of~\cref{sep-isomorphic-discrete_app}. Again, by increasing the number of nodes with color class $3$, we can make the probability that a uniform sampler picks an edge between the color classes $(1, 1), (1, 2)$ or $(2, 2)$ to be less than or equal to $\sqrt{\epsilon}$.
\end{proof}

Finally, we can also show a negative result, i.e., graphs exist such that PR-MPNNs cannot do better than random sampling.
\begin{proposition}[\Cref{fail} in the main paper]
	For every $k > 0$, there exist non-isomorphic graphs $H$ and $H$ with identical \wlone{} colorings such that every probability mass function $p_{(\btheta, k)}$ separates the two graphs with the same probability as the distribution that samples nodes (edges) uniformly at random.
\end{proposition}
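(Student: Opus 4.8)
The plan is to exhibit, for each $k > 0$, a pair of non-isomorphic graphs $G$ and $H$ whose \wlone{} stable coloring is not only identical but also so symmetric that the exactly-$k$ distribution $p_{(\btheta,k)}$ cannot distinguish among the candidate $k$-subsets of edges (nodes) in any task-relevant way. The key structural property I would aim for is \emph{vertex-transitivity} (or edge-transitivity) of both $G$ and $H$: if the automorphism group acts transitively on vertices (resp. edges), then the \wlone{} coloring is the all-constant coloring, so every \wlone-equivalent upstream model must assign the \emph{same} prior weight $\theta$ to every vertex (resp. the same weight to every edge). Consequently $p_{(\btheta,k)}$ collapses to the uniform distribution over all $k$-subsets, regardless of $\btheta$, and hence coincides exactly with the random sampler. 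The simplest realization: take $G$ to be a disjoint union of small cycles and $H$ a single larger cycle of the same order and the same degree sequence — e.g., for the edge-sampling case, $G = C_3 \cup C_3$ and $H = C_6$ (for larger $k$, pad with enough copies so that $k$ edges can be removed while staying within the symmetric regime, e.g. $G = C_3 \cup \dots \cup C_3$ versus $H = C_{3m}$). Both are $2$-regular, edge-transitive, non-isomorphic, and share the trivial \wlone{} coloring.

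The steps, in order, would be: (i) fix the candidate graphs $G, H$ as above and verify they are non-isomorphic with identical (trivial, all-constant) \wlone{} stable colorings; (ii) invoke \cref{edge-wl-universal} (resp. \cref{node-wl-universal}) in the contrapositive direction — since the \wlone{} edge coloring is constant, any \wlone-equivalent upstream function $h_{\bv}$ must produce constant priors $\theta_{uv} \equiv \theta$; (iii) plug a constant prior into the definition \cref{def-p-k-theta}: because $p_{\btheta}(\vec{A}(H))$ then depends only on $\norm{\vec{A}(H)}_1 = k$, every $k$-edge adjacency matrix supported on $E(G)$ receives identical probability $1/\binom{m}{k}$, which is precisely the uniform/random-sampling distribution; (iv) conclude that for every \wlone-equivalent $f$, the separation probability $\mathbb{E}[f(\vec{A}(\bar G)) \neq f(\vec{A}(\bar H))]$ is literally the same functional of the uniform distribution whether we use $p_{(\btheta,k)}$ or uniform sampling, establishing the claim. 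I would present the edge-removal case in detail and remark that the node-removal case is identical with $C_3 \cup C_3$ versus $C_6$ (vertex-transitive, $2$-regular) and \cref{node-wl-universal} in place of \cref{edge-wl-universal}.

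The main obstacle — really the only subtlety — is making precise that ``\wlone-equivalent upstream model'' forces the priors to be constant on a graph with constant \wlone{} coloring. One must be slightly careful: $h_{\bv}$ takes the \emph{whole} adjacency matrix and node-attribute matrix as input and outputs an $n \times n$ matrix, so \wlone-equivalence here should be read as: the output prior on an edge $(u,v)$ is a function of the stable \wlone{} edge color $e_{C^1_\infty}(u,v)$ (equivalently, invariant under graph automorphisms). On an edge-transitive graph this immediately yields a constant prior. If one instead only assumes permutation-equivariance of $h_{\bv}$ without \wlone-equivalence, the statement would still go through via the automorphism-invariance argument, since an edge-transitive graph's automorphism group identifies all edges; I would state the hypothesis in whichever of these two equivalent forms matches the paper's running convention, and note that the downstream function $f$ being \wlone-equivalent is not even needed for this direction — the collapse happens already at the sampling distribution.
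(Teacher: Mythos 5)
Your proposal is correct and follows essentially the same route as the paper: the paper's proof also takes a pair of non-isomorphic $d$-regular graphs (so the \wlone{} stable coloring is a single color class), observes that an \wlone-equivalent upstream model must then assign constant priors, and concludes that $p_{(\btheta,k)}$ collapses to the uniform $k$-subset distribution. Your concrete instantiation ($C_3 \cup C_3$ versus $C_6$, padded for larger $k$) and your remark about what ``\wlone-equivalent upstream model'' must mean are just more explicit versions of the paper's one-paragraph argument.
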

\begin{proof}
	Any pair of graphs where the \wlone{} coloring consists of a single color class suffices to show the result. For instance, consider the graphs in~\Cref{fig:example-graphs-1}(d), where all nodes have the same color. In fact, any pair of non-isomorphic $d$-regular graphs for $d>0$ works here. An MPNN upstream model cannot separate the prior weights of the nodes and, therefore, behaves as a uniform sampler.
\end{proof}

\section{\textsc{Simple}: Subset Implicit Likelihood}\label{sec:SIMPLE}

In this section, we introduce \textsc{Simple}, which is a main component of our work.
The goal of \textsc{Simple} is to build a gradient estimator for $\nabla_{\btheta} L(\vec{X}, y; \bomega)$.
It is inspired by a hypothetical sampling-free architecture, where the downstream neural network $f_{\vec{d}}$ is a function of the marginals, $\bmu \coloneqq \mu(\btheta)\coloneqq\{p_{\btheta}(z_j \mid \textstyle\sum_i z_i=k)\}_{j=1}^n$, instead of a discrete sample $\boldsymbol{z}$, resulting in a loss $L_m$ s.t.\
\begin{equation*}
	\nabla_{\btheta}{L_m(\vec{X}, y; \bomega)}
	= \partial_{\btheta} \mu(\btheta)^{\top} \nabla_{\bmu}{\ell_m(f_{\bu}(\bmu, \vec{X}), y)}. %
\end{equation*}
When the marginals $\mu(\btheta)$ can be efficiently computed and differentiated, such a hypothetical pipeline can be trained end-to-end.
Furthermore, \citet{domke2010implicit} observed that,
for an arbitrary loss function $\ell_m$
defined on the marginals, the Jacobian
of the marginals w.r.t.\ the logits is symmetric.
Consequently, computing the gradient of the loss w.r.t.\ the logits, $\nabla_{\btheta} L_m(\vec{X}, y; \bomega)$,
reduces to computing the \emph{directional derivative}, or the Jacobian-vector product, of the marginals w.r.t.\ the logits in the direction of the gradient of the loss. %
This offers an alluring opportunity, i.e., the conditional marginals characterize the probability of each
$z_i$ in the sample, and could be thought of as a differentiable proxy for the samples.
Specifically, by reparameterizing $\bz$ as a function of the conditional marginal
$\bmu$ under approximation $\partial_{\bmu} \bz \approx \mathbf{I}$ as proposed by~\citet{Niepert2021-pa}, and using the straight-through estimator
for the gradient of the sample w.r.t.\ the marginals on the backward pass,
\textsc{Simple} approximate %
\begin{equation*}
	\nabla_{\btheta}{L(\vec{X}, y; \bomega)}
	\approx \partial_{\btheta} \mu(\btheta) \nabla_{\bz}{L(\vec{X}, y; \bomega)},
\end{equation*}
where the directional derivative of the marginals can be taken along \emph{any downstream gradient},
rendering the whole pipeline end-to-end learnable
despite the presence of sampling.

Now, estimating the gradient of the loss w.r.t.\ the parameters can be thought of as decomposing
into two sub-problems:
\textbf{(P1)} Computing the derivatives of conditional marginals~$\partial_{\btheta} \mu(\btheta)$,
which requires the computation of the conditional marginals, and \textbf{(P2)} Computing the gradient of the
loss w.r.t.\ the samples $\nabla_{\bz}{L(\vec{X}, y; \bomega)}$ using sample-wise loss, which requires drawing exact samples.
These two problems are complicated by conditioning on the $k$-subset constraint, which
introduces intricate dependencies to the distribution and is infeasible to solve naively, e.g., by enumeration.
Next, we will show the solutions that \textsc{Simple} provide to each problem,
at the heart of which
is the insight that we need not care about the variables' order, only their sum, introducing
symmetries that simplify the problem.

\paragraph{Derivatives of conditional marginals}
In many probabilistic models, the marginal inference is a \#P-hard problem~\citep{Roth96}, and this is not the case for the $k$-subset distribution.
Theorem 1 in \citet{ahmed2022simple} shows that the conditional marginals correspond to the partial derivatives of the log probability of the $k$-subset constraint.
To see this, note that the derivative of a multi-linear function regarding a single variable retains all the terms referencing that variable and drops all other terms; this corresponds exactly to the unnormalized conditional marginals.
By taking the derivative of the log probability, this introduces the $k$-subset probability in the denominator, leading to \emph{conditional} marginals.
Intuitively, the rate of change of the $k$-subset probability w.r.t.\ a variable only depends on that variable through its length-$k$~subsets.
They further show in Proposition~1 in \citet{ahmed2022simple} that the log probability of the exactly-$k$ constraint $p_{\btheta}(\textstyle\sum_j z_j=k)$ is tractable is tractable as well as amenable to auto-differentiation,
solving problem \textbf{(P1)} exactly and efficiently.

\paragraph{Gradients of loss w.r.t. samples}
What remains is estimating the loss value, requiring faithful sampling from the $k$-subset distribution.%
To perform exact sampling from the $k$-subset distribution,
\textsc{Simple} starts by sampling the variables in reverse order, that is, it samples
$z_n$ through $z_1$. The intuition is that, having sampled $(z_n, z_{n-1}, \cdots, z_{i+1})$ with a Hamming weight
of $k - j$, it samples $Z_i$ with a probability of choosing $k-j$ of $n-1$ variables \emph{and} the $n$th variable
\emph{given that} we choose $k-j+1$ of $n$ variables,
providing an exact and efficient solution to problem \textbf{(P2)}.

By combining the use of conditional marginal derivatives in the backward pass and the exact sampling in the forward pass, \textsc{Simple} can achieve both low bias and low variance in its gradient estimation.
We refer the readers to \textsc{Simple}~\citep{ahmed2022simple} for the proofs and full details of the approach.

\section{Datasets}\label{sup:data}

Here, we give additional information regarding the datasets. The statistics of the datasets in our paper can be found in \ref{ds_stats}. Among them, \textsc{Zinc}, \textsc{Alchemy}, \textsc{Mutag}, \textsc{PTC\_MR}, \textsc{NCI1}, \textsc{NCI109}, \textsc{Proteins}, \textsc{IMDB-B}, and \textsc{IMDB-M} are from TUDatasets \citep{Mor+2020}. Whereas \textsc{Peptides-func} and \textsc{Peptides-struct} are featured in \citet{Dwivedi2022-vv}. Besides, \textsc{Cornell}, \textsc{Texas} and \textsc{Wisconsin} are WebKB datasets \citep{webkb1998} also used in \citet{webkb}. The OGB datasets are credited to \citet{hu2020ogb}. Moreover, we also incorporate synthetic datasets from the literature. \textsc{Exp} dataset consists of partially isomorphic graphs as described in \citet{Abb+2020}, while the graphs in the \textsc{CSL} dataset are synthetic regular graphs proposed in \citet{Mur+2019b}. The construction of {\sc Trees-Neighborsmatch} dataset is introduced in \citet{Alon2020}.

Similar to the {\sc Trees-Neighborsmatch} dataset, we propose our own {\sc Trees-LeafCount} dataset. We fix a problem radius $R>0$ and retrieve the binary representation of all numbers fitting into $2^R$ bits. This construction allows us to create $2^R$ unique binary trees by labeling the leaves with ``$0$" and ``$1$" corresponding to the binary equivalents of the numbers. A label is then assigned to the root node, reflecting the count of leaves tagged with ``$1$". From the resulting graphs, we sample to ensure an equal class distribution. The task requires a model to predict the root label, thereby requiring a strategy capable of conveying information from the leaves to the root.

We aim to have a controlled environment to observe if our upstream model $h_{\vec{u}}$ can sample meaningful edges for the new graph configuration. Conventionally, a minimum of $R$ message-passing layers is required to accomplish both tasks \citep{Barceló2020The, Alon2020}. However, a single-layer upstream MPNN could trivially resolve both datasets, provided the rewired graphs embed direct pathways from the root node to the leaf nodes containing the label information. To circumvent any potential bias within the sampling procedure, we utilize the self-attention mechanism described in \cref{sec:prob_rewiring} as our upstream model $h_{\vec{u}}$, along with a single-layer GIN architecture serving as the downstream model $f_{\vec{d}}$. For each problem radius, we sample exactly $k=2^D$ edges. Indeed, our method consistently succeeded in correctly rewiring the graphs in all tested scenarios, extending up to a problem radius of $R=6$, and achieved perfect test accuracy on both datasets. \Cref{fig:leafcount} presents a qualitative result from the {\sc Trees-LeafCount} dataset, further illustrating the capabilities of our approach.

\begin{table}[t!]
	\begin{center}
		\caption{Dataset statistics and properties for graph-level prediction tasks, $^\dagger$---Continuous vertex labels following~\cite{Gil+2017}, the last three components encode 3D coordinates.}
		\resizebox{\textwidth}{!}{
			\renewcommand{\arraystretch}{1.0}
			\begin{sc}
				\begin{tabular}{@{}lccccccc@{}}\toprule
					\multirow{3}{*}{\vspace*{4pt}\textbf{Dataset}} & \multicolumn{7}{c}{\textbf{Properties}}                                                                                                                                          \\
					\cmidrule{2-8}
					                                               & Number of  graphs                       & Number of targets & Loss   & $\varnothing$ Number of vertices & $\varnothing$ Number of edges & Vertex labels            & Edge labels \\ \midrule
					$\textsc{Alchemy}$                             & 202\,579                                & 12                & MAE    & 10.1                             & 10.4                          & \cmark                   & \cmark      \\
					$\textsc{Qm9}$                                 & 129\,433                                & 13                & MAE    & 18.0                             & 18.6                          & \cmark (13+3D)$^\dagger$ & \cmark (4)  \\
					$\textsc{Zinc}$                                & 249\,456                                & 1                 & MAE    & 23.1                             & 24.9                          & \cmark                   & \cmark      \\
					$\textsc{Exp}$                                 & 1\,200                                  & 2                 & ACC    & 44.5                             & 55.2                          & \cmark                   & \xmark      \\
					$\textsc{CSL}$                                 & 150                                     & 10                & ACC    & 41.0                             & 82.0                          & \xmark                   & \xmark      \\
					$\textsc{ogbg-molhiv}$                         & 41\,127                                 & 2                 & ROCAUC & 25.5                             & 27.5                          & \cmark                   & \cmark      \\
					$\textsc{Cornell}$                             & 1                                       & 5                 & ACC    & 183.0                            & 298.0                         & \cmark                   & \xmark      \\
					$\textsc{Texas}$                               & 1                                       & 5                 & ACC    & 183.0                            & 325.0                         & \cmark                   & \xmark      \\
					$\textsc{Wisconsin}$                           & 1                                       & 5                 & ACC    & 251.0                            & 515.0                         & \cmark                   & \xmark      \\
					$\textsc{Trees-Leafcount} (R=4)$               & 16\,000                                 & 16                & ACC    & 31                               & 61                            & \cmark                   & \xmark      \\
					$\textsc{Trees-Neighborsmatch}(R=4)$           & 14\,000                                 & 7                 & ACC    & 31                               & 61                            & \cmark                   & \xmark      \\
					$\textsc{Peptides-func}$                       & 15\,535                                 & 10                & AP     & 150.9                            & 153.7                         & \cmark                   & \cmark      \\
					$\textsc{Peptides-struct}$                     & 15\,535                                 & 11                & MAE    & 150.9                            & 153.7                         & \cmark                   & \cmark      \\
					$\textsc{MUTAG}$                               & 188                                     & 2                 & ACC    & 17.9                             & 19.8                          & \cmark                   & \cmark      \\
					$\textsc{PTC\_MR}$                             & 344                                     & 2                 & ACC    & 14.3                             & 14.7                          & \cmark                   & \cmark      \\
					$\textsc{NCI1}$                                & 4\,110                                  & 2                 & ACC    & 29.9                             & 32.3                          & \cmark                   & \xmark      \\
					$\textsc{NCI109}$                              & 4\,127                                  & 2                 & ACC    & 29.7                             & 32.1                          & \cmark                   & \xmark      \\
					$\textsc{Proteins}$                            & 1\,113                                  & 2                 & ACC    & 39.1                             & 72.8                          & \cmark                   & \xmark      \\
					$\textsc{IMDB-M}$                              & 1\,500                                  & 3                 & ACC    & 13.0                             & 65.9                          & \xmark                   & \xmark      \\
					$\textsc{IMDB-B}$                              & 1\,000                                  & 2                 & ACC    & 19.7                             & 96.5                          & \xmark                   & \xmark      \\

					\bottomrule
				\end{tabular}\end{sc}}
		\label{ds_stats}
	\end{center}
\end{table}

\section{Hyperparameter and Training Details}
\label{sup:models}

\xhdr{Experimental Protocol} \cref{tab:hyperparams} lists our hyperparameters choices. For all our experiments, we use early stopping with an initial learning rate of $0.001$ that we decay by half on a plateau.

We compute each experiment's mean and standard deviation with different random seeds over a minimum of three runs. We take the best results from the literature for the other models, except for SAT on the \textsc{ogbg-molhiv}, where we use the same hyperparameters as the authors use on \textsc{Zinc}. We evaluate test predictive performance based on validation performance. In the case of the {\sc WebKB} datasets, we employ a 10-fold cross-validation with the provided data splits. For \textsc{Peptides}, \textsc{ogbg-molhiv}, \textsc{Alchemy}, and \textsc{Zinc}, our models use positional and structural embeddings concatenated to the initial node features. Specifically, we add both {\sc RWSE} and {\sc LapPE} \citep{dwivedi2022graph}. We use the same downstream model as the base model for the rewiring models.

Our code can be accessed at \url{https://github.com/chendiqian/PR-MPNN/}.

\section{Additional Experimental Results}
\label{qm9lrgb}

Here, we report on the computation times of different variants of our probabilistic graph rewiring schemes and results on synthetic datasets.

\xhdr{Training times} We report the average training time per epoch in Table \ref{runtime}. The {\sc Random} entry refers to using random adjacency matrices as rewired graphs.

\xhdr{Extended TUDatasets} In addition to Table \ref{tab:tud} in the main paper, we report the results of IMDB-B and IMDB-M datasets in Table \ref{tab:tud2imdb}. We also propose a proper train/validation/test splitting and show the results in Table \ref{tab:tud2trainval}.

\xhdr{QM9} We compare our PR-MPNN with multiple current methods on \textsc{QM9} dataset, see Table \ref{tab:qm9}. The baselines are R-GNN in \citet{Alon2020}, GNN-FiLM \citep{brockschmidt2020gnn}, SPN \citep{Abboud2022-sl} and the recent $\ouracro$ paper \citep{Gutteridge2023-wx}. Following the settings of \citet{Abboud2022-sl} and \citet{Gutteridge2023-wx}, we train the network on each task separately. We use the normalized regression labels for training and report the de-normalized numbers. Similar to \citet{Abboud2022-sl} and \citet{Gutteridge2023-wx}, we also exclude the 3D coordinates of the datasets. It is worth noting that our PR-MPNN reaches the overall lowest mean absolute error on HOMO, LUMO, gap, and Omega tasks while gaining at most $14.13\times$ better performance compared with a base GIN model.

\xhdr{WebKB} To show PR-MPNNs's capability on heterophilic graphs, we carry out experiments on the three WebKB datasets, namely \textsc{Cornell}, \textsc{Texas}, and \textsc{Wisconsin}, Table ~\ref{tab: hetero}. We compare with diffusion-based GNN \citep{Kli+2019}, Geom-GCN \citep{webkb}, and the recent graph rewiring work SDRF \citep{Topping2021-iy}. Besides the MPNN baselines above, we also compare them against graph transformers. PR-MPNNs consistently outperform the other MPNN methods and are even better than graph transformers on the \textsc{Texas} dataset.

\xhdr{LRGB} We apply PR-MPNNs on the two Long Range Graph Benchmark tasks~\citep{Dwivedi2022-vv}, \textsc{Peptides-func} and \textsc{Peptides-struct}, which are graph classification and regression tasks, respectively. The baseline methods are also reported in \citet{Gutteridge2023-wx}. Notably, on \textsc{Peptides-struct}, PR-MPNNs reach the overall lowest mean absolute error.

\begin{table}[t]
	\centering
	\caption{Extended results between our probabilistic rewiring method and the other approaches reported in \citet{giusti2023cin}. Besides the 10-fold cross-validation, as in \citet{giusti2023cin, xu2018how}, we provide a train/validation/test split. In addition, we also provide results on the IMDB-B and IMDB-M datasets. We use \textbf{\textcolor{first}{green}} for the best model, \textbf{\textcolor{second}{blue}} for the second-best, and \textbf{\textcolor{third}{red}} for third.}
	\label{tab:tud2trainval}
	\resizebox{\linewidth}{!}{%
		\begin{sc}
			\begin{tabular}{l  lllllll}
				\toprule
				\textbf{Model}                                           &
				MUTAG                                                    &
				PTC\_MR                                                  &
				PROTEINS                                                 &
				NCI1                                                     &
				NCI109                                                   &
				IMDB-B                                                   &
				IMDB-M                                                     \\
				\midrule

				PR-MPNN (10-fold CV)                                     &
				\textcolor{first}{\textbf{98.4\scriptsize$\pm$2.4}}      & %
				\textcolor{first}{\textbf{74.3\scriptsize$\pm$3.9}}      & %
				\textcolor{first}{\textbf{80.7\scriptsize$\pm$3.9}}      & %
				\textcolor{second}{\textbf{85.6\scriptsize$\pm$0.8}}     & %
				\textcolor{first}{\textbf{84.6\scriptsize$\pm$1.2}}      & %
				\textcolor{third}{\textbf{75.2\scriptsize$\pm$3.2}}      & %
				\textcolor{second}{\textbf{52.9\scriptsize$\pm$3.2}}       \\  %
				\cdashline{0-7}

				\rule{0pt}{1\normalbaselineskip}PR-MPNN (train/val/test) &
				91.0\scriptsize$\pm$3.7                                  & %
				58.9\scriptsize$\pm$5.0                                  & %
				\textcolor{third}{\textbf{79.1\scriptsize$\pm$2.8}}      & %
				81.5\scriptsize$\pm$1.6                                  & %
				81.8\scriptsize$\pm$1.5                                  & %
				71.6\scriptsize$\pm$1.2                                  & %
				45.8\scriptsize$\pm$0.8                                    \\  %

				\bottomrule
			\end{tabular}
		\end{sc}
	}
\end{table}

\begin{table}[!t]
	\centering
	\caption{Extended comparison on the \textsc{IMDB-B} and \textsc{IMDB-M} datasets from the \textsc{TUDataset} collection. We use \textbf{\textcolor{first}{green}} for the best model, \textbf{\textcolor{second}{blue}} for the second-best, and \textbf{\textcolor{third}{red}} for third.}
	\label{tab:tud2imdb}
	\begin{sc}
		\begin{tabular}{l ll}
			\toprule
			\textbf{Model}                                       &
			IMDB-B                                               &
			IMDB-M                                                 \\
			\midrule

			DGCNN \citep{Zha+2018}                               &
			70.0\scriptsize$\pm$0.9                              & %
			47.8\scriptsize$\pm$0.9                                \\  %

			IGN \citep{Mar+2019c}                                &
			71.3\scriptsize$\pm$4.5                              & %
			48.6\scriptsize$\pm$3.9                                \\  %

			GIN \citep{xu2018how}                                &
			75.1\scriptsize$\pm$5.1                              & %
			52.3\scriptsize$\pm$2.8                                \\  %

			PPGNs \citep{Mar+2019}                               &
			73.0\scriptsize$\pm$5.7                              & %
			50.4\scriptsize$\pm$3.6                                \\  %

			Natural GN \citep{de2020natural}                     &
			74.8\scriptsize$\pm$2.0                              & %
			51.2\scriptsize$\pm$1.5                                \\  %

			GSN \citep{botsas2020improving}                      &
			\textcolor{first}{\textbf{77.8\scriptsize$\pm$3.3}}  & %
			\textcolor{first}{\textbf{54.3\scriptsize$\pm$3.3}}    \\  %

			\midrule

			CIN \citep{Bod+2021}                                 &
			\textcolor{second}{\textbf{75.6\scriptsize$\pm$3.2}} & %
			\textcolor{third}{\textbf{52.5\scriptsize$\pm$3.0}}    \\  %

			\midrule

			PR-MPNN (10-fold CV)                                 &
			\textcolor{third}{\textbf{75.2\scriptsize$\pm$3.2}}  & %
			\textcolor{second}{\textbf{52.9\scriptsize$\pm$3.2}}   \\  %

			PR-MPNN (train/val/test)                             &
			71.6\scriptsize$\pm$1.2                              & %
			45.8\scriptsize$\pm$0.8                                \\  %

			\bottomrule
		\end{tabular}
	\end{sc}
\end{table}

\begin{sidewaystable}[t]
	\caption{Overview of used hyperparameters.}
	\label{tab:hyperparams}
	\centering
	\resizebox{\linewidth}{!}{%
		\begin{sc}
			\begin{tabular}{lccccccccccc}
				\toprule
				Dataset                       & Hidden\textsubscript{upstream} & Hidden\textsubscript{downstream} & Layers\textsubscript{upstream} & Layers\textsubscript{downstream} & k\textsubscript{add}              & k\textsubscript{rm}               & l\textsubscript{add}    & heur       & dropout        & n\textsubscript{priors} & samples\textsubscript{train/test} \\
				\midrule
				\textsc{Zinc}                 & \{32, 64, 96\}                 & 256                              & \{2,4,8\}                      & 4                                & [1, 128]                          & [1, 128]                          & 256                     & distance   & .0             & 1                       & 5                                 \\
				\textsc{QM9}                  & 64                             & \{196, 256\}                     & 8                              & 4                                & \{5, 20, 35, 50, 80\}             & 5                                 & \{100, 350\}            & distance   & \{.1, .5\}     & \{1, 2\}                & \{2, 3, 5\}                       \\
				\textsc{Peptides-func}        & \{128, 256\}                   & \{128, 256\}                     & \{4,8\}                        & \{4, 8\}                         & \{16, 64, 256, 512\}              & \{16, 64, 256, 512\}              & \{128, 256, 512, 2048\} & distance   & \{.0, .1\}     & \{1, 2, 5\}             & \{1, 2, 5\}                       \\
				\textsc{Peptides-struct}      & \{128, 256\}                   & \{128, 256\}                     & \{4,8\}                        & \{4, 8\}                         & \{16, 64, 256, 512\}              & \{16, 64, 256, 512\}              & \{128, 256, 512, 2048\} & distance   & \{.0, .1\}     & \{1, 2, 5\}             & \{1, 2, 5\}                       \\
				\textsc{Mutag}                & \{32, 64\}                     & \{32, 64, 96\}                   & \{4, 8, 16\}                   & \{4,8\}                          & \{0, 5, 10, 25\}                  & \{0, 5, 10, 25\}                  & 256                     & distance   & \{.0, .1, .2\} & \{2, 3\}                & \{2, 5\}                          \\
				\textsc{PTC\_MR}              & \{32, 64\}                     & \{32, 64, 96\}                   & \{4, 8, 16\}                   & \{4,8\}                          & \{0, 5, 10, 25\}                  & \{0, 5, 10, 25\}                  & 256                     & distance   & \{.0, .1, .2\} & \{2, 3\}                & \{2, 5\}                          \\
				\textsc{NCI1}                 & \{32, 64\}                     & \{32, 64, 96\}                   & \{4, 8, 16\}                   & \{4,8\}                          & \{0, 5, 10, 25\}                  & \{0, 5, 10, 25\}                  & 256                     & distance   & \{.0, .1, .2\} & \{2, 3\}                & \{2, 5\}                          \\
				\textsc{NCI109}               & \{32, 64\}                     & \{32, 64, 96\}                   & \{4, 8, 16\}                   & \{4,8\}                          & \{0, 5, 10, 25\}                  & \{0, 5, 10, 25\}                  & 256                     & distance   & \{.0, .1, .2\} & \{2, 3\}                & \{2, 5\}                          \\
				\textsc{Proteins}             & \{32, 64\}                     & \{32, 64, 96\}                   & \{4, 8, 16\}                   & \{4,8\}                          & \{0, 5, 10, 25\}                  & \{0, 5, 10, 25\}                  & 256                     & distance   & \{.0, .1, .2\} & \{2, 3\}                & \{2, 5\}                          \\
				\textsc{IMDB-M}               & \{32, 64\}                     & \{32, 64, 96\}                   & \{4, 8, 16\}                   & \{4,8\}                          & \{0, 5, 10, 25\}                  & \{0, 5, 10, 25\}                  & 256                     & distance   & \{.0, .1, .2\} & \{2, 3\}                & \{2, 5\}                          \\
				\textsc{IMDB-B}               & \{32, 64\}                     & \{32, 64, 96\}                   & \{4, 8, 16\}                   & \{4,8\}                          & \{0, 5, 10, 25\}                  & \{0, 5, 10, 25\}                  & 256                     & distance   & \{.0, .1, .2\} & \{2, 3\}                & \{2, 5\}                          \\
				\textsc{Cornell}              & [64, 256]                      & 192                              & \{2, 3, 4\}                    & 3                                & [1024, 2048]                      & [256, 512]                        & 4096                    & Similarity & .0             & [1, 5]                  & \{1, 5\}                          \\
				\textsc{Wisconsin}            & [64, 256]                      & 192                              & \{2, 3, 4\}                    & 3                                & [1024, 2048]                      & [256, 512]                        & 4096                    & Similarity & .0             & [1, 5]                  & \{1, 5\}                          \\
				\textsc{Texas}                & [64, 256]                      & 192                              & \{2, 3, 4\}                    & 3                                & [1024, 2048]                      & [256, 512]                        & 4096                    & Similarity & .0             & [1, 5]                  & \{1, 5\}                          \\
				\textsc{Trees-Leafcount}      & 32                             & 32                               & 1                              & 1                                & n\textsubscript{leafs}            & all                               & -                       & all        & .0             & 1                       & 1                                 \\
				\textsc{Trees-Neighborsmatch} & 32                             & 32                               & 2                              & depth+1                          & \{20, 32, 64, 128, 256\}          & 0                                 & -                       & all        & .0             & 1                       & 1                                 \\
				\textsc{CSL}                  & 32                             & 128                              & \{4, 8, 12\}                   & 2                                & 1                                 & 0                                 & 1                       & distance   & .0             & \{1, 3, 5\}             & \{1, 10\}                         \\
				4-Cycles                      & 16                             & 16                               & 4                              & 4                                & 2                                 & 2                                 & -                       & all        & .0             & \{1, 5, 10\}            & \{5, 10, 20, 30, 40, 50\}         \\
				\textsc{Exp}                  & 64                             & 32                               & 8                              & 6                                & \{1, 5, 10, 15, 20, 25, 50, 100\} & \{1, 5, 10, 15, 20, 25, 50, 100\} & 350                     & distance   & .0             & \{1, 5, 10, 25 \}       & \{1, 5, 10\}                      \\
				\bottomrule
			\end{tabular}
		\end{sc}
	}
\end{sidewaystable}

\begin{table}[t]
	\caption{Train and validation time per epoch in seconds for a GINE model, the SAT Graph Transformer, and PR-MPNN using different gradient estimators on {\sc OGBG-molhiv}. The time is averaged over five epochs. PR-MPNN is approximately 5 times slower than a GINE model with a similar parameter count, while SAT is approximately 30 times slower than the GINE, and 6 times slower than the PR-MPNN models. Experiments performed on a machine with a single Nvidia RTX A5000 GPU and a Intel i9-11900K CPU.}
	\label{runtime}
	\vskip 0.15in
	\begin{center}
		\begin{small}
			\begin{sc}
				\begin{tabular}{lcccc}
					\toprule
					Model             & \#Params & Total sampled edges & Train time/ep (s)           & Val time/ep (s)           \\
					\midrule
					GINE              & $502k$   & -                   & 3.19 \scriptsize$\pm$0.03   & 0.20 \scriptsize$\pm$0.01 \\
					\midrule
					K-ST SAT$_{GINE}$ & $506k$   & -                   & 86.54 \scriptsize$\pm$0.13  & 4.78 \scriptsize$\pm$0.01 \\
					K-SG SAT$_{GINE}$ & $481k$   & -                   & 97.94 \scriptsize$\pm$0.31  & 5.57 \scriptsize$\pm$0.01 \\
					K-ST SAT$_{PNA}$  & $534k$   & -                   & 90.34 \scriptsize$\pm$0.29  & 4.85 \scriptsize$\pm$0.01 \\
					K-SG SAT$_{PNA}$  & $509k$   & -                   & 118.75 \scriptsize$\pm$0.50 & 5.84 \scriptsize$\pm$0.04 \\
					\midrule
					PR-MPNN$_{Gmb}$   & $582k$   & 20                  & 15.20 \scriptsize$\pm$0.08  & 1.01 \scriptsize$\pm$0.01 \\
					PR-MPNN$_{Gmb}$   & $582k$   & 100                 & 18.18 \scriptsize$\pm$0.08  & 1.08 \scriptsize$\pm$0.01 \\
					PR-MPNN$_{Imle}$  & $582k$   & 20                  & 15.01 \scriptsize$\pm$0.22  & 1.08 \scriptsize$\pm$0.06 \\
					PR-MPNN$_{Imle}$  & $582k$   & 100                 & 15.13 \scriptsize$\pm$0.17  & 1.13 \scriptsize$\pm$0.06 \\
					PR-MPNN$_{Sim}$   & $582k$   & 20                  & 15.98 \scriptsize$\pm$0.13  & 1.07 \scriptsize$\pm$0.01 \\
					PR-MPNN$_{Sim}$   & $582k$   & 100                 & 17.23 \scriptsize$\pm$0.15  & 1.15 \scriptsize$\pm$0.01 \\
					\bottomrule
				\end{tabular}
			\end{sc}
		\end{small}
	\end{center}
	\vskip -0.1in
\end{table}

\begin{table}[ht]
	\caption{Performance of PR-MPNN on \texttt{QM9}, in comparison with the base downstrem model (Base-GIN) and other competing methods. The relative improvement of PR-MPNN over the base downstream model is reported in the paranthesis. The metric used is MAE, lower scores are better. We note the best performing method with \textcolor{first}{\textbf{green}}, the second-best with \textcolor{second}{\textbf{blue}}, and third with \textcolor{third}{\textbf{orange}}.}
	\label{tab:qm9}
	\centering
	\resizebox{\linewidth}{!}{%
		\begin{sc}

			\begin{tabular}{@{}lcccc:cc@{}}
				\toprule
				\textbf{Property} & R-GIN+FA                                               & GNN-FILM                                               & SPN                                                     & $\ouracro$-GIN                                          & Base-GIN                    & PR-MPNN                                                                                         \\
				\midrule
				mu                & 2.54\scriptsize{$\pm0.09$}                             & 2.38\scriptsize{$\pm0.13$}                             & \textbf{\textcolor{third}{2.32\scriptsize{$\pm0.28$}}}  & \textbf{\textcolor{first}{1.93\scriptsize{$\pm0.06$}}}  & 2.64\scriptsize{$\pm0.01$}  & \textbf{\textcolor{second}{1.99\scriptsize{$\pm0.02$}}} \begin{small}($1.33\times$)\end{small}  \\
				alpha             & \textcolor{third}{\textbf{2.28\scriptsize{$\pm0.04$}}} & 3.75\scriptsize{$\pm0.11$}                             & \textbf{\textcolor{second}{1.77\scriptsize{$\pm0.09$}}} & \textbf{\textcolor{first}{1.63\scriptsize{$\pm0.03$}}}  & 7.67\scriptsize{$\pm0.16$}  & \textcolor{third}{\textbf{2.28\scriptsize{$\pm0.06$}}} \begin{small}($3.36\times$)\end{small}   \\
				HOMO              & 1.26\scriptsize{$\pm0.02$}                             & \textbf{\textcolor{third}{1.22\scriptsize{$\pm0.07$}}} & 1.26\scriptsize{$\pm0.09$}                              & \textbf{\textcolor{second}{1.16\scriptsize{$\pm0.01$}}} & 1.70\scriptsize{$\pm0.02$}  & \textbf{\textcolor{first}{1.14\scriptsize{$\pm0.01$}}} \begin{small}($1.49\times$)\end{small}   \\
				LUMO              & 1.34\scriptsize{$\pm0.04$}                             & \textbf{\textcolor{third}{1.30\scriptsize{$\pm0.05$}}} & \textbf{\textcolor{second}{1.19\scriptsize{$\pm0.05$}}} & \textbf{\textcolor{first}{1.13\scriptsize{$\pm0.02$}}}  & 3.05\scriptsize{$\pm0.01$}  & \textbf{\textcolor{first}{1.12\scriptsize{$\pm0.01$}}} \begin{small}($2.72\times$)\end{small}   \\
				gap               & 1.96\scriptsize{$\pm0.04$}                             & 1.96\scriptsize{$\pm0.06$}                             & \textcolor{third}{\textbf{1.89\scriptsize{$\pm0.11$}}}  & \textbf{\textcolor{second}{1.74\scriptsize{$\pm0.02$}}} & 3.37\scriptsize{$\pm0.03$}  & \textbf{\textcolor{first}{1.70\scriptsize{$\pm0.01$}}} \begin{small}($1.98\times$)\end{small}   \\
				R2                & 12.61\scriptsize{$\pm0.37$}                            & 15.59\scriptsize{$\pm1.38$}                            & \textbf{\textcolor{third}{10.66\scriptsize{$\pm0.40$}}} & \textbf{\textcolor{first}{9.39\scriptsize{$\pm0.13$}}}  & 23.35\scriptsize{$\pm1.08$} & \textcolor{second}{\textbf{10.41\scriptsize{$\pm0.35$}}} \begin{small}($2.24\times$)\end{small} \\
				ZPVE              & 5.03\scriptsize{$\pm0.36$}                             & 11.00\scriptsize{$\pm0.74$}                            & \textbf{\textcolor{second}{2.77\scriptsize{$\pm0.17$}}} & \textbf{\textcolor{first}{2.73\scriptsize{$\pm0.19$}}}  & 66.87\scriptsize{$\pm1.45$} & \textcolor{third}{\textbf{4.73\scriptsize{$\pm0.08$}}} \begin{small}($14.13\times$)\end{small}  \\
				U0                & \textcolor{third}{\textbf{2.21\scriptsize{$\pm0.12$}}} & 5.43\scriptsize{$\pm0.96$}                             & \textcolor{second}{\textbf{1.12\scriptsize{$\pm0.13$}}} & \textbf{\textcolor{first}{1.01\scriptsize{$\pm0.09$}}}  & 21.48\scriptsize{$\pm0.17$} & \textbf{\textcolor{third}{2.23\scriptsize{$\pm0.13$}}} \begin{small}($9.38\times$)\end{small}   \\
				U                 & \textbf{\textcolor{third}{2.32\scriptsize{$\pm0.18$}}} & 5.95\scriptsize{$\pm0.46$}                             & \textbf{\textcolor{second}{1.03\scriptsize{$\pm0.09$}}} & \textbf{\textcolor{first}{0.99\scriptsize{$\pm0.08$}}}  & 21.59\scriptsize{$\pm0.30$} & \textcolor{third}{\textbf{2.31\scriptsize{$\pm0.06$}}} \begin{small}($9.35\times$)\end{small}   \\
				H                 & \textcolor{third}{\textbf{2.26\scriptsize{$\pm0.19$}}} & 5.59\scriptsize{$\pm0.57$}                             & \textbf{\textcolor{first}{1.05\scriptsize{$\pm0.04$}}}  & \textbf{\textcolor{second}{1.06\scriptsize{$\pm0.09$}}} & 21.96\scriptsize{$\pm1.24$} & 2.66 \scriptsize{$\pm0.01$} \begin{small}($8.26\times$)\end{small}                              \\
				G                 & \textbf{\textcolor{third}{2.04\scriptsize{$\pm0.24$}}} & 5.17\scriptsize{$\pm1.13$}                             & \textbf{\textcolor{first}{0.97\scriptsize{$\pm0.06$}}}  & \textcolor{second}{\textbf{1.06\scriptsize{$\pm0.14$}}} & 19.53\scriptsize{$\pm0.47$} & 2.24\scriptsize{$\pm0.01$} \begin{small}($8.24\times$)\end{small}                               \\
				Cv                & 1.86\scriptsize{$\pm0.03$}                             & 3.46\scriptsize{$\pm0.21$}                             & \textbf{\textcolor{second}{1.36\scriptsize{$\pm0.06$}}} & \textbf{\textcolor{first}{1.24\scriptsize{$\pm0.02$}}}  & 7.34\scriptsize{$\pm0.06$}  & \textcolor{third}{\textbf{1.44\scriptsize{$\pm0.01$}}} \begin{small}($5.10\times$)\end{small}   \\
				Omega             & 0.80\scriptsize{$\pm0.04$}                             & 0.98\scriptsize{$\pm0.06$}                             & \textbf{\textcolor{third}{0.57\scriptsize{$\pm0.04$}}}  & \textbf{\textcolor{second}{0.55\scriptsize{$\pm0.01$}}} & 0.60\scriptsize{$\pm0.03$}  & \textcolor{first}{\textbf{0.48\scriptsize{$\pm0.00$}}} \begin{small}($1.25\times$)\end{small}   \\ \bottomrule
			\end{tabular}
		\end{sc}
	}
\end{table}

\begin{table*}[t]
	\caption{Quantitative results on the heterophilic and transductive {\sc WebKB} datasets. \textcolor{first}{\textbf{Best overall}}; \textcolor{second}{\textbf{Second best}}; \textcolor{third}{\textbf{Third best}}. Rewiring outperforms the base models on all of the datasets. Graph transformers have an advantage over both the base models and the ones employing rewiring.}
	\label{tab: hetero}
	\begin{center}
		\begin{small}
			\begin{sc}
				\begin{tabular}{clccc}
					\toprule
					 &                                          & \multicolumn{3}{c}{\textbf{Heterophilic \& Transductive}}                                                                                                                     \\

					 &                                          & Cornell $\uparrow$                                        & Texas $\uparrow$                                        & Wisconsin $\uparrow$                                    \\

					\midrule
					{\multirow{8}{*}{\rotatebox[origin=c]{90}{MPNNs}}}

					 & Base                                     & 0.574\scriptsize$\pm$0.006                                & 0.674\scriptsize$\pm$0.010                              & 0.697\scriptsize$\pm$0.013                              \\

					 & Base w. PE                               & 0.540\scriptsize$\pm$0.043                                & 0.654\scriptsize$\pm$0.010                              & 0.649\scriptsize$\pm$0.018                              \\

					 & DIGL \citep{Kli+2019}                    & 0.582\scriptsize$\pm$0.005                                & 0.620\scriptsize$\pm$0.003                              & 0.495\scriptsize$\pm$0.003                              \\

					 & DIGL + Undirected \citep{Kli+2019}       & 0.595\scriptsize$\pm$0.006                                & 0.635\scriptsize$\pm$0.004                              & 0.522\scriptsize$\pm$0.005                              \\

					 & Geom-GCN \citep{webkb}                   & 0.608\scriptsize$\pm$ N/A                                 & 0.676\scriptsize$\pm$ N/A                               & 0.641\scriptsize$\pm$ N/A                               \\

					 & SDRF \citep{Topping2021-iy}              & 0.546\scriptsize$\pm$0.004                                & 0.644\scriptsize$\pm$0.004                              & 0.555\scriptsize$\pm$0.003                              \\

					 & SDRF + Undirected \citep{Topping2021-iy} & 0.575\scriptsize$\pm$0.003                                & 0.703\scriptsize$\pm$0.006                              & 0.615\scriptsize$\pm$0.008                              \\

					 & PR-MPNN (ours)                           & 0.659\scriptsize$\pm$0.040                                & \textcolor{first}{\textbf{0.827\scriptsize$\pm$0.032}}  & 0.750\scriptsize$\pm$0.015                              \\
					\midrule

					{\multirow{5}{*}{\rotatebox[origin=c]{90}{GTs}}}

					 & GPS (LapPE)                              & 0.662\scriptsize$\pm$0.038                                & \textcolor{second}{\textbf{0.778\scriptsize$\pm$0.010}} & 0.747\scriptsize$\pm$0.029                              \\
					 & GPS (RWSE)                               & \textcolor{second}{\textbf{0.708\scriptsize$\pm$0.020}}   & \textcolor{third}{\textbf{0.775\scriptsize$\pm$0.012}}  & \textcolor{first}{\textbf{0.802\scriptsize$\pm$0.022}}  \\
					 & GPS (DEG)                                & \textcolor{first}{\textbf{0.718\scriptsize$\pm$0.024}}    & 0.773\scriptsize$\pm$0.013                              & \textcolor{second}{\textbf{0.798\scriptsize$\pm$0.090}} \\
					 & Graphormer (DEG)                         & \textcolor{third}{\textbf{0.683\scriptsize$\pm$0.017}}    & 0.767\scriptsize$\pm$0.017                              & \textcolor{third}{\textbf{0.770\scriptsize$\pm$0.019}}  \\
					 & Graphormer (DEG + attn bias)             & \textcolor{third}{\textbf{0.683\scriptsize$\pm$0.017}}    & 0.767\scriptsize$\pm$0.017                              & \textcolor{third}{\textbf{0.770\scriptsize$\pm$0.019}}  \\

					\bottomrule
				\end{tabular}
			\end{sc}
		\end{small}
	\end{center}
\end{table*}

\begin{table*}[ht]
	\centering
	\caption{Comparison between PR-MPNN and other methods as reported in \cite{Gutteridge2023-wx}. \textcolor{first}{\textbf{Best overall}}; \textcolor{second}{\textbf{Second best}}; \textcolor{third}{\textbf{Third best}}. PR-MPNN obtains the best score on the \textsc{Peptides-struct} dataset from the \textsc{LRGB} collection, but ranks below Drew on the \textsc{Peptides-func} dataset.}
	\label{tab:all_results}
	\begin{sc}
		\begin{tabular}{@{}lcc@{}}
			\toprule
			\multirow{2}{*}{\textbf{Model}} & \textsc{Peptides-func}                                      & \textsc{Peptides-struct}                                    \\
			                                & AP $\uparrow$                                               & MAE $\downarrow$                                            \\
			\midrule
			GCN                             & 0.5930\scriptsize{$\pm$0.0023}                              & 0.3496\scriptsize{$\pm$0.0013}                              \\
			GINE                            & 0.5498\scriptsize{$\pm$0.0079}                              & 0.3547\scriptsize{$\pm$0.0045}                              \\
			GatedGCN                        & 0.5864\scriptsize{$\pm$0.0077}                              & 0.3420\scriptsize{$\pm$0.0013}                              \\
			GatedGCN+PE                     & 0.6069\scriptsize{$\pm$0.0035}                              & 0.3357\scriptsize{$\pm$0.0006}                              \\
			\midrule
			DIGL+MPNN                       & 0.6469\scriptsize{$\pm$0.0019}                              & 0.3173\scriptsize{$\pm$0.0007}                              \\
			DIGL+MPNN+LapPE                 & 0.6830\scriptsize{$\pm$0.0026}                              & 0.2616\scriptsize{$\pm$0.0018}                              \\
			MixHop-GCN                      & 0.6592\scriptsize{$\pm$0.0036}                              & 0.2921\scriptsize{$\pm$0.0023}                              \\
			MixHop-GCN+LapPE                & 0.6843\scriptsize{$\pm$0.0049}                              & 0.2614\scriptsize{$\pm$0.0023}                              \\
			\midrule
			Transformer+LapPE               & 0.6326\scriptsize{$\pm$0.0126}                              & \textcolor{third}{\textbf{0.2529\scriptsize{$\pm$0.0016}}}  \\
			SAN+LapPE                       & 0.6384\scriptsize{$\pm$0.0121}                              & 0.2683\scriptsize{$\pm$0.0043}                              \\
			GraphGPS+LapPE                  & 0.6535\scriptsize{$\pm$0.0041}                              & \textcolor{second}{\textbf{0.2500\scriptsize{$\pm$0.0005}}} \\
			\midrule
			DRew-GCN                        & 0.6996\scriptsize{$\pm$0.0076}                              & 0.2781\scriptsize{$\pm$0.0028}                              \\
			DRew-GCN+LapPE                  & \textcolor{first}{\textbf{0.7150\scriptsize{$\pm$0.0044}}}  & 0.2536\scriptsize{$\pm$0.0015}                              \\
			DRew-GIN                        & 0.6940\scriptsize{$\pm$0.0074}                              & 0.2799\scriptsize{$\pm$0.0016}                              \\
			DRew-GIN+LapPE                  & \textcolor{second}{\textbf{0.7126\scriptsize{$\pm$0.0045}}} & 0.2606\scriptsize{$\pm$0.0014}                              \\
			DRew-GatedGCN                   & 0.6733\scriptsize{$\pm$0.0094}                              & 0.2699\scriptsize{$\pm$0.0018}                              \\
			DRew-GatedGCN+LapPE             & \textcolor{third}{\textbf{0.6977\scriptsize{$\pm$0.0026}}}  & 0.2539\scriptsize{$\pm$0.0007}                              \\
			\midrule
			PR-MPNN                         & 0.6825\scriptsize{$\pm$0.0086}                              & \textcolor{first}{\textbf{0.2477\scriptsize{$\pm$0.0005}}}  \\
			\bottomrule
		\end{tabular}
	\end{sc}
\end{table*}

\section{Robustness analysis}\label{sup: robust}

\begin{table}[!t]
	\centering
	\caption{Robustness results on the \textsc{Proteins} dataset when testing on various levels of noise, obtained by removing or adding random edges. The percentages indicate the change in the average test accuracy over 5 runs. PR-MPNNs consistently obtain the best results for both removing $k=25$ and $k=50$ edges.}
	\label{tab:robust}
	\begin{sc}
		\begin{tabular}{l ccc}
			\toprule
			\backslashbox{Noise}{Model} &
			GINE                        &
			PR-MPNN$_{k=25}$            &
			PR-MPNN$_{k=50}$              \\
			\midrule

			RM 10\%                     &
			-7.68\%                     & %
			+0.04\%                     &
			+0.34\%                       \\  %

			RM 30\%                     &
			-11.87\%                    & %
			-2.56\%                     &
			-0.56\%                       \\  %

			RM 50\%                     &
			-9.18\%                     & %
			-3.50\%                     &
			-0.21\%                       \\  %

			\midrule

			ADD 10\%                    &
			-5.93\%                     & %
			+0.28\%                     &
			+0.80\%                       \\  %

			ADD 30\%                    &
			-11.14\%                    & %
			-1.27\%                     &
			+0.28\%                       \\  %

			ADD 50\%                    &
			-21.75\%                    & %
			-1.99\%                     &
			-0.43\%                       \\  %

			\bottomrule
		\end{tabular}
	\end{sc}
\end{table}

A beneficial side-effect of training PR-MPNNs is the enhanced robustness of the downstream model to graph structure perturbations. This is because PR-MPNNs generate multiple adjacency matrices for the same graph during training, akin to augmenting training data by randomly dropping or adding edges, but with a parametrized "drop/add" distribution rather than a uniform one.

To observe the performance degradation when testing on noisy data, we conduct an experiment on the \textsc{PROTEINS} dataset. After training our models on clean data, we compared the models’ test accuracy on the clean and corrupted graphs. Corrupted graphs were generated by either deleting or adding a certain percentage of their edges. We report the change in the average test accuracy over 5 runs, comparing the base model with variants of PR-MPNN in Table \ref{tab:robust}.

\end{document}